\documentclass{article}

\usepackage{arxiv}

\usepackage[utf8]{inputenc}
\usepackage[T1]{fontenc}
\usepackage{url}
\usepackage{booktabs}
\usepackage{array}
\usepackage{amsmath, amssymb, amsfonts, amsthm}
\usepackage{nicefrac}
\usepackage{microtype}
\usepackage{graphicx}
\usepackage{natbib}
\usepackage{xcolor}
\usepackage{placeins}
\usepackage{float}
\newfloat{algorithm}{tbp}{loa}
\floatname{algorithm}{Algorithm}
\usepackage{hyperref}
\usepackage{doi}

\title{RODE: A Radial-Orthogonal Decoupled Engine for Optimization}

\author{
	Guoxiang Xu \quad Bince Qu \quad Qi Sun\textsuperscript{*} \quad Cheng Zhuo\textsuperscript{*} \\
	\normalfont Zhejiang University, Hangzhou, China \\
	\small\texttt{\{12647046,12621175\}@zju.edu.cn} \quad
	\textsuperscript{*}Correspondence: \texttt{\{qisunchn,czhuo\}@zju.edu.cn}
}

\renewcommand{\shorttitle}{RODE: A Radial-Orthogonal Decoupled Engine for Optimization}

\hypersetup{
hidelinks,
breaklinks=true,
pdftitle={RODE: A Radial-Orthogonal Decoupled Engine for Optimization},
pdfsubject={Optimization for deep neural networks},
pdfauthor={Guoxiang Xu, Bince Qu, Qi Sun, Cheng Zhuo},
pdfkeywords={optimization, matrix optimization, Newton-Schulz, geodesic update, radial decomposition},
}
\newcommand{\R}{\mathbb{R}}
\newcommand{\E}{\mathbb{E}}
\newcommand{\fro}{\mathrm{F}}
\newcommand{\tr}{\mathrm{tr}}
\newcommand{\Polar}{\mathrm{Polar}}
\newcommand{\clip}{\mathrm{clip}}

\newcommand{\widefigwidth}{0.94\linewidth}
\newcommand{\panelwidth}{0.40\linewidth}
\newcommand{\panelgap}{\hspace{0.03\linewidth}}

\newtheorem{assumption}{Assumption}
\newtheorem{lemma}{Lemma}
\newtheorem{theorem}{Theorem}

\begin{document}
\maketitle

\begin{abstract}
Modern neural network training increasingly uses matrix-aware optimizers, yet their conditioned matrix step is typically added directly to the weight, jointly changing its norm and direction. This interaction matters because the current norm determines angular motion, while directional learning can drive norm growth and thereby alter later steps. We introduce RODE, which gives the radial and directional components separate update rules and step sizes. RODE explicitly updates the matrix Frobenius norm through a scalar radial rule, while its directional channel performs Newton--Schulz-conditioned updates in the tangent space. Controlled GPT-2 interventions show gains from both direct norm control and RODE's directional update. Across two language-modeling and two image-classification tasks, RODE outperforms both Muon variants in every direct comparison and ends with lower full-model norms. At 1.5B scale, using the learning rate transferred directly from the Qwen2-style LM sweep, RODE lowers loss from $4.145$ to $3.346$ and final global norm from $11964$ to $2183$ relative to Muon RMS, with fixed-radius RODE improving further. For Qwen3.5-9B full-parameter fine-tuning, all six optimizers use the same tuning budget and the same formal-training and evaluation settings; RODE outperforms both Muon variants on all four evaluation tasks and attains the highest mean on GSM8K and MATH-500. Thus, decoupling radial and directional dynamics offers a more effective and controllable approach to matrix optimization.
\end{abstract}

\keywords{Optimization \and Matrix parameters \and Newton-Schulz iteration \and Radial-directional optimization \and Deep learning}

\section{Introduction}
\label{sec:introduction}

Modern neural networks are trained largely by updating weight matrices. Adam and AdamW adapt individual coordinates \citep{kingma2015adam,loshchilov2019decoupled}, whereas matrix-aware optimizers exploit two-dimensional structure; Muon, for example, applies Newton--Schulz iterations to momentum \citep{jordan2024muon,higham2008functions}. Despite different constructions, most ultimately add a matrix-valued step to the current weight,
\begin{equation}
    W_{t+1}=W_t+\Delta W_t.
\label{eq:additive_update}
\end{equation}
We refer to this operation as an \emph{additive matrix update}.

An additive matrix update has two geometrically distinct effects. Write
\begin{equation}
    W_t = \rho_t U_t, \qquad \rho_t = \|W_t\|_\fro, \qquad U_t = \frac{W_t}{\rho_t}, \qquad \|U_t\|_\fro = 1,
\label{eq:polar_split}
\end{equation}
where $U_t$ is direction and $\rho_t$ is scale. In normalized or scale-invariant blocks, rescaling may preserve the represented function but still change the angular motion of the next update \citep{vanlaarhoven2017l2,hoffer2018norm,heo2021adamp,kosson2024rotational}; elsewhere, scale can also affect the function directly. Thus norm and direction play different roles.

Let $Q_t$ denote a matrix update produced by a conditioner such as Muon, so that $W_{t+1}=W_t-\eta Q_t$. Then
\[
\|W_t-\eta Q_t\|_\fro^2=\rho_t^2-2\eta\rho_t\langle U_t,Q_t\rangle_\fro+\eta^2\|Q_t\|_\fro^2,
\]
The component parallel to $U_t$ changes Frobenius norm to first order, while $Q_t^\perp$ changes direction by $\eta\|Q_t^\perp\|_\fro/\rho_t+\mathcal O(\eta^2)$. Even if $Q_t$ is tangent, the quadratic term grows the squared norm. Directional learning can therefore increase norm and reduce the angle realized by later absolute steps. Muon's update is generally not tangent to $U_t$, so one conditioned step mixes both effects. Weight decay can oppose growth, but only through an optimizer- and schedule-dependent balance \citep{kosson2024rotational,wang2025adamw}. Explicit radial control instead makes this choice visible and tunable.

We introduce \textbf{RODE}, a \textbf{R}adial-\textbf{O}rthogonal \textbf{D}ecoupled \textbf{E}ngine for matrix optimization. RODE extracts radial and directional signals, gives them different update rules and learning rates, and retains Newton--Schulz conditioning for direction. It uses tangent projections and a spherical directional realization, while a scalar rule updates radius. Radius evolution and directional learning thus become separately adjustable, while momentum across changing tangent spaces introduces a residual interaction that is accounted for in our analysis (Appendix~\ref{app:global_convergence}).

RODE retains matrix-aware conditioning while making norm and directional dynamics explicitly controllable. We evaluate its components through controlled interventions, its overall optimization performance across four language and vision tasks, hyperparameter transfer at 1.5B scale, and full-parameter fine-tuning at 9B scale.

Our contributions are:
\begin{itemize}
    \item We introduce RODE, a matrix optimizer that reformulates additive matrix optimization in radial--directional coordinates, turning weight scale and direction from implicitly coupled consequences of an update into separately controllable optimization variables.

    \item First, RODE introduces an explicit radial optimization channel. It extracts the radial gradient of each managed matrix and updates its Frobenius norm with a dedicated scalar rule and learning rate, preventing norm evolution from being an uncontrolled by-product of directional learning.

    \item Second, RODE develops an orthogonal directional engine for matrix-aware optimization. It reprojects momentum onto the current tangent space, applies projected Newton--Schulz conditioning, and realizes the resulting update as an on-sphere directional step with an independent learning rate. This construction preserves matrix-aware conditioning while decoupling directional progress from the current parameter norm. Our analysis further accounts for the residual interaction induced by momentum across changing tangent spaces and provides convergence guarantees under explicit assumptions (Appendices~\ref{app:global_convergence} and~\ref{app:pl_convergence}).

    \item We validate both the mechanism and the resulting optimizer empirically. Controlled interventions separately identify the gains from explicit norm control and the directional engine, while experiments across language modeling and image classification, together with the 1.5B hyperparameter-transfer study, demonstrate consistent improvements over Muon variants with substantially more controlled norm growth. In Qwen3.5-9B full-parameter fine-tuning, all six optimizers use the same tuning budget and formal-training and evaluation settings; RODE outperforms both Muon variants on all four evaluation tasks and attains the highest mean on GSM8K and MATH-500.
\end{itemize}

\section{Related Work}
\label{sec:related}

RODE connects two often separate lines of work: matrix-valued update directions and parameter-scale control.

\paragraph{Adaptive and matrix-aware optimizers.}
Adam(W) combines momentum with coordinate-wise second moments \citep{kingma2015adam,loshchilov2019decoupled}; Adafactor factorizes that state \citep{shazeer2018adafactor}, and AdEMAMix mixes fast and slow momenta \citep{pagliardini2024ademamix}. Shampoo uses Kronecker-factored statistics \citep{gupta2018shampoo}; SOAP updates Adam-like statistics in the preconditioner's eigenbasis \citep{vyas2025soap}; and MARS-M adapts variance reduction to matrix updates \citep{yuan2024mars,liu2025marsm}. Muon applies short Newton--Schulz iterations to momentum \citep{jordan2024muon}, with later work studying its large-scale recipe \citep{liu2025muonscalable}. Muon adds its conditioned step directly; RODE gives radial and matrix-conditioned directional signals different rules.

\paragraph{Scale, direction, and manifold methods.}
Weight normalization exposes magnitude and direction in the model parameterization \citep{salimans2016weight}. For scale-invariant normalized weights, the norm controls the effective learning rate: update-induced norm growth slows subsequent directional change, and weight decay can counteract this effect \citep{vanlaarhoven2017l2,hoffer2018norm,roburin2022spherical}. AdamP removes norm-increasing radial momentum components to prevent premature effective-step decay \citep{heo2021adamp}. Rotational Equilibrium shows that update-induced growth and weight decay can settle into an equilibrium norm and angular update, and that controlling rotation explicitly recovers important benefits of that balance \citep{kosson2024rotational}. The appropriate AdamW decay also changes with model and dataset scale and can determine whether learning-rate transfer remains valid \citep{wang2025adamw}. Together, these results establish weight norm as a controller of optimization dynamics, not merely a regularization statistic.

LARS/LAMB scale additive steps by parameter norms \citep{you2017large,you2020large}; Fromage uses layer-relative updates \citep{bernstein2020fromage}; and Nero performs angle-controlled projected updates per neuron \citep{liu2021nero}. RODE instead exposes one matrix-wide radius as an optimizer coordinate and applies its orthogonalizing kernel to tangent-projected momentum. Riemannian optimization methods based on tangent-space updates and manifold-preserving maps motivate RODE's constrained directional update \citep{absil2008optimization,bonnabel2013stochastic,becigneul2019riemannian}.

\paragraph{Recent radius--direction methods.}
Hyperball wraps a base optimizer by constraining each managed matrix to the sphere defined by its initial Frobenius norm and normalizing the base-optimizer update, so the learning rate directly sets the relative update length \citep{wen2026hyperball}. The radius remains fixed throughout training. Magnitude--Direction Decoupling also maintains a fixed-norm directional factor, while scale is represented by separately optimized row and column gains; the directional factor can be updated by Adam, Muon, or another base optimizer \citep{hagele2026md}. In its default form, RODE learns one matrix-wide scalar radius directly from the radial gradient. Its directional channel applies projected Newton--Schulz conditioning to tangent-projected momentum. The radial and directional channels use independent learning rates.

\section{Method}
\label{sec:method}

RODE decomposes the current signal into radial and directional components, gives them different rules and step sizes, and retains matrix conditioning for direction. Figure~\ref{fig:rode_mechanism} contrasts this interface with a direct additive update. We describe one matrix; other parameters use AdamW.

\begin{figure}[t]
	\centering
	\includegraphics[width=\widefigwidth]{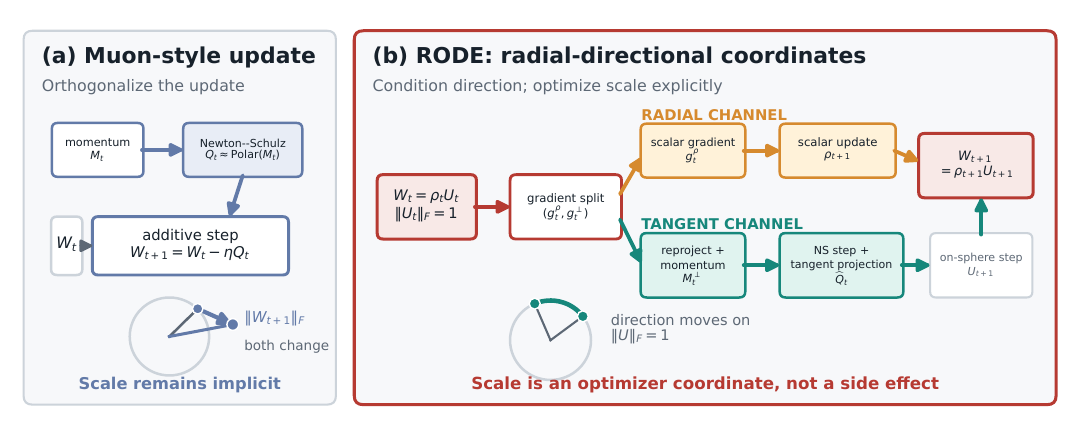}
		\caption{Muon conditions momentum and adds the resulting matrix step directly to $W_t$, so its effects on norm and direction are jointly determined. RODE exposes a scalar radial signal and a tangent-projected directional signal, assigns them separate step sizes, and reconstructs the managed matrix from the resulting $\rho_{t+1}$ and $U_{t+1}$.}
	\label{fig:rode_mechanism}
\end{figure}

\subsection{Radial--tangent gradient decomposition}

Let $g_t$ be a stochastic estimator of $G_t=\nabla f(W_t)$, and let $\langle A,B\rangle_\fro=\tr(A^\top B)$. We use $\mathcal P_t$ to denote the orthogonal projection onto the tangent space at the current direction $U_t$. Specifically,
\begin{align}
    \mathcal P_t(X)&=X-\langle X,U_t\rangle_\fro U_t, &
    g_t^\rho&=\langle g_t,U_t\rangle_\fro, &
    g_t^\perp&=\mathcal P_t(g_t).
\label{eq:gradient_split}
\end{align}
Then $\langle g_t^\perp,U_t\rangle_\fro=0$, so $g_t^\rho$ and $g_t^\perp$ are the instantaneous radial and tangent components of the current gradient. Under the parameterization $W_t=\rho_t U_t$, the gradient with respect to the directional coordinate $U$ is $\rho_t g_t^\perp$. RODE omits this scale factor when forming the directional signal, so that the directional update is not directly scaled by the current matrix norm. Since a positive scalar does not change the normalized polar direction used by the Newton--Schulz conditioner, $\eta_{\mathrm{dir}}$ can directly control the directional step.

\subsection{Radial update}

Let $B_{t-1}$ denote the raw momentum buffer stored from the previous step, and recall that $\mathcal P_t$ projects a matrix onto the tangent space at the current direction $U_t$. The radial channel updates only the matrix Frobenius norm, using the current gradient component parallel to $U_t$. RODE first reprojects the historical momentum onto the current tangent space and then adds the current gradient:
\begin{align}
    \widetilde B_t&=\beta\mathcal P_t(B_{t-1})+g_t, &
    s_t&=\langle\widetilde B_t,U_t\rangle_\fro=g_t^\rho, \nonumber\\
    \rho_{t+1}&=\max\{\rho_t-\eta_\rho s_t,\kappa_{\mathrm{floor}}\rho_t\}. &&
\label{eq:radial_update}
\end{align}
Because $\mathcal P_t(B_{t-1})$ lies in the tangent space at $U_t$, it is orthogonal to $U_t$ and therefore contributes no radial component. Hence the radial signal satisfies $s_t=g_t^\rho$ and is determined only by the current gradient. The scalar learning rate $\eta_\rho$ directly controls the change in the matrix Frobenius norm. The multiplicative floor prevents a single update from collapsing the radius or crossing zero; our analysis considers the interior regime in which this safeguard is inactive.

\subsection{Tangent momentum and conditioning}

RODE forms the tangent momentum by projecting the updated raw buffer,
\begin{equation}
    M_t^\perp=\mathcal P_t(\widetilde B_t),
\label{eq:tangent_momentum}
\end{equation}
while storing $B_t=\widetilde B_t$ for the next step. Because the direction $U_t$ changes during training, radial content retained in the raw buffer can enter a future tangent space after reprojection. Appendix~\ref{app:global_convergence} bounds this cross-step residual.

For wide matrices, RODE first transposes the matrix and initializes
\begin{equation}
    X_0=\frac{M_t^\perp}{\|M_t^\perp\|_\fro}.
\label{eq:ns_init}
\end{equation}
It then applies the projected Newton--Schulz iterations
\begin{align}
    A_k&=X_k^\top X_k, \nonumber\\
    X_{k+1}&=\mathcal P_t\!\left[
    3.4445X_k+
    X_k\{-4.7750A_k+2.0315A_k^2\}
    \right].
\label{eq:ns}
\end{align}
Newton--Schulz iterations approximate matrix sign and polar factors \citep{higham2008functions}. Each polynomial step can introduce a component parallel to $U_t$, so RODE projects the result back onto the current tangent space after every iteration. After $K$ iterations, we denote the resulting conditioned tangent kernel by
\begin{equation}
    \widehat Q_t=X_K.
\label{eq:conditioned_tangent_kernel}
\end{equation}
Its Frobenius norm is used in the directional update below.

\subsection{Spherical directional realization}

Let
\begin{equation}
    q_t=\|\widehat Q_t\|_\fro.
\label{eq:conditioned_kernel_norm}
\end{equation}
For $q_t>0$, define the normalized tangent direction $D_t^\perp$ and rotation angle $\theta_t$ as
\begin{align}
    D_t^\perp&=\widehat Q_t/q_t, &
    \theta_t&=\eta_{\mathrm{dir}}q_t, \nonumber\\
    U_{t+1}&=U_t\cos\theta_t-D_t^\perp\sin\theta_t, &
    W_{t+1}&=\rho_{t+1}U_{t+1}.
\label{eq:sphere_update}
\end{align}
Because $D_t^\perp$ lies in the tangent space at $U_t$, we have $\langle D_t^\perp,U_t\rangle_\fro=0$, which gives $\|U_{t+1}\|_\fro=1$ in exact arithmetic. The implementation renormalizes $U_{t+1}$ for numerical stability and handles the $q_t=0$ case separately; Appendix~\ref{app:update} gives pseudocode.

\subsection{Optional scalar damping}

RODE-SNR optionally damps the directional step using a running estimate of the tangent-gradient scale. We maintain a scalar state $v_t$ that accumulates the recent magnitude of $g_t^\perp$:
\begin{equation}
    v_t=\beta v_{t-1}+\|g_t^\perp\|_\fro+\epsilon.
\label{eq:snr_scale}
\end{equation}
The damping factor compares the current tangent-momentum norm with this reference scale:
\begin{equation}
    \gamma_t=\clip\!\left(
        \frac{\|M_t^\perp\|_\fro}{v_t},
        \gamma_{\min},1
    \right).
\label{eq:snr_damping}
\end{equation}
RODE-SNR then uses
\begin{equation}
    \theta_t=\eta_{\mathrm{dir}}\gamma_t q_t,
\label{eq:snr_angle}
\end{equation}
so $\gamma_t$ changes only the directional step magnitude and leaves $D_t^\perp$ unchanged. Default RODE sets $\gamma_t=1$; we report RODE-SNR as an optional, separately tuned variant.

\section{Theory}
\label{sec:theory}

Appendix~\ref{app:global_convergence} analyzes the implemented raw-buffer recurrence in the nondegenerate interior regime, where the radial floor is inactive and the projected tangent kernel remains nonzero. The analysis assumes smoothness, bounded stochastic noise and iterates, local tangent-gradient regularity, bounded rotation angles, and positive alignment between the projected Newton--Schulz kernel and the tangent momentum.

For the full gradient $G_t=\nabla f(W_t)$, let
$G_t^\rho=\langle G_t,U_t\rangle_\fro$ and
$G_t^\perp=\mathcal P_t(G_t)$ denote its radial and tangent components. Under the stated assumptions, an exact expansion of the displacement $W_{t+1}-W_t$ separates radial descent, tangent descent, geodesic curvature, and cross-step leakage from the raw momentum buffer. A Lyapunov function combining $f(W_t)$ with the tangent-momentum tracking error yields
\begin{align}
    \limsup_{T\to\infty}
    \frac{1}{T}\sum_{t=1}^{T}
    \E\!\left[(G_t^\rho)^2\right]
    &\le \epsilon_\rho, &
    \limsup_{T\to\infty}
    \frac{1}{T}\sum_{t=1}^{T}
    \E\!\left[\|G_t^\perp\|_\fro^2\right]
    &\le \epsilon_\perp.
\label{eq:neighborhood_summary}
\end{align}
The explicit expressions for $\epsilon_\rho$ and $\epsilon_\perp$ are given in Theorem~\ref{app:thm:neighborhood}; they depend on the radial and directional step sizes, the stochastic-noise levels, and the constants in the stated assumptions. These bounds cover both gradient components even though the stored raw momentum need not remain perfectly tangent as the direction $U_t$ changes.

Appendix~\ref{app:pl_convergence} separately studies the first-order approximation
\begin{equation}
    W_{t+1}=W_t-\eta H_t,
\label{eq:aligned_first_order_model}
\end{equation}
where $H_t$ is the effective update direction and $\eta>0$ is its step size. The approximation assumes that $H_t$ has uniformly bounded Frobenius norm and remains positively aligned with the full gradient:
\begin{equation}
    \langle G_t,H_t\rangle_\fro
    \ge
    \alpha\|G_t\|_\fro^2,
    \qquad \alpha>0,
\label{eq:effective_alignment}
\end{equation}
where $\alpha$ quantifies the minimum degree of alignment. The objective is also assumed to satisfy the Polyak--Lojasiewicz condition \citep{karimi2016pl},
\begin{equation}
    \frac{1}{2}\|G_t\|_\fro^2
    \ge
    \mu\bigl(f(W_t)-f^*\bigr),
    \qquad \mu>0,
\label{eq:pl_condition_summary}
\end{equation}
where $\mu$ is the PL constant and $f^*=\inf_W f(W)$ is the optimal objective value. If
$0<2\eta\alpha\mu<1$, smoothness gives
\begin{equation}
    f(W_t)-f^*
    \le
    (1-2\eta\alpha\mu)^t
    \bigl(f(W_0)-f^*\bigr)
    +\mathcal O(\eta).
\label{eq:pl_rate_summary}
\end{equation}
Thus the aligned first-order model converges linearly to a neighborhood whose size is controlled by $\eta$. Its contraction factor $1-2\eta\alpha\mu$ depends jointly on the alignment strength $\alpha$ and the PL constant $\mu$.

\section{Experiments}
\label{sec:experiments}

We test optimization quality and norm behavior, separate radial and directional contributions, then examine scale, modality, components, and systems cost.

\subsection{Experimental questions and setup}
\label{sec:setup}

We study two WikiText-103 \citep{merity2017pointer} decoders: standard GPT-2 \citep{radford2019language} for 8000 steps and a randomly initialized Qwen2-style language model for 4000 steps. The latter has 114M parameters and uses RoPE, RMSNorm, grouped-query attention, and SwiGLU \citep{qwen2report}; it is a locally instantiated architecture rather than an official Qwen checkpoint. Vision experiments train ResNet-50 \citep{he2016deep} on CIFAR-100 \citep{krizhevsky2009learning} and ImageNet-1K \citep{deng2009imagenet}. For a separate scaling study, we scale the Qwen2-style architecture to a 1.5B-parameter decoder and train it for 20000 steps on FineWeb \citep{penedo2024fineweb}. We additionally conduct a full-parameter fine-tuning benchmark on Qwen3.5-9B \citep{qwen35}; all six optimizers use the same tuning budget and formal-training and evaluation settings.

All reported optimization-quality comparisons use three seeds. In language pretraining, RODE manages two-dimensional non-embedding, non-head matrices; in vision it manages flattened multi-dimensional weights. Remaining parameters use AdamW. A within-task formal comparison shares the model, data, horizon, seed set, and checkpoint rule; optimizer-specific rates are selected by Optuna or transferred as marked. The original vision suite reuses standard-GPT-2 settings, added baselines reuse Qwen2-style-LM settings, and the 1.5B test reuses Qwen2-style-LM settings without another sweep. Appendix~\ref{app:hyperparams} specifies routing, search spaces, schedules, and selected values. Evaluation metrics and all tabulated values use raw logs.

\begin{table}[htbp]
	\caption{Cross-task optimization performance on language modeling and image classification benchmarks. Results are reported as mean $\pm$ sample standard deviation over three seeds. Vision reports best accuracy on the official CIFAR-100 test set and the ImageNet-1K validation set. \textsuperscript{$\dagger$}Rates transferred from the Qwen2-style LM; \textsuperscript{$\ddagger$}rates transferred from GPT-2. Best available mean per column is bold.}
	\centering
	\normalsize
	\setlength{\tabcolsep}{3pt}
	\begin{tabular}{lcccc}
		\toprule
		Optimizer & GPT-2 loss $\downarrow$ & Qwen2-style LM loss $\downarrow$ & CIFAR acc. $\uparrow$ & ImageNet acc. $\uparrow$ \\
		\midrule
		AdamW & $3.1792 \pm 0.0173$ & $3.0503 \pm 0.0071$ & $68.66 \pm 0.29$\textsuperscript{$\ddagger$} & $62.49 \pm 0.46$\textsuperscript{$\ddagger$} \\
		Muon Original & $3.0295 \pm 0.0001$ & $2.8185 \pm 0.0065$ & $71.49 \pm 0.67$\textsuperscript{$\ddagger$} & $64.98 \pm 0.05$\textsuperscript{$\ddagger$} \\
		Muon RMS & $3.0265 \pm 0.0027$ & $3.0205 \pm 0.0103$ & $71.11 \pm 0.14$\textsuperscript{$\ddagger$} & $64.48 \pm 0.14$\textsuperscript{$\ddagger$} \\
		SGD & -- & -- & $72.77 \pm 0.24$ & $70.25 \pm 0.08$ \\
		RODE & $2.9774 \pm 0.0011$ & $2.7731 \pm 0.0013$ & $\mathbf{73.75 \pm 0.29}$\textsuperscript{$\ddagger$} & $69.74 \pm 0.47$\textsuperscript{$\ddagger$} \\
		RODE-SNR & $2.9724 \pm 0.0557$ & $\mathbf{2.7493 \pm 0.0011}$ & $73.64 \pm 0.56$\textsuperscript{$\ddagger$} & $\mathbf{70.49 \pm 0.09}$\textsuperscript{$\ddagger$} \\
		SOAP & $\mathbf{2.9522 \pm 0.0022}$\textsuperscript{$\dagger$} & $2.7684 \pm 0.0008$ & $73.03 \pm 0.03$\textsuperscript{$\dagger$} & -- \\
		MARS-M & $2.9856 \pm 0.0009$\textsuperscript{$\dagger$} & $2.8885 \pm 0.0040$ & $66.26 \pm 0.63$\textsuperscript{$\dagger$} & -- \\
		AdEMAMix & $3.2088 \pm 0.0029$\textsuperscript{$\dagger$} & $2.9807 \pm 0.0026$ & $65.57 \pm 0.35$\textsuperscript{$\dagger$} & -- \\
		\bottomrule
	\end{tabular}
	\label{tab:main_results}
\end{table}

\subsection{Optimization quality and norm control}

RODE shows consistent gains across all four main benchmarks (Table~\ref{tab:main_results}). Compared with the stronger Muon variant on each task, it reduces loss by $0.0491$ on GPT-2 and $0.0454$ on the Qwen2-style LM, while improving accuracy by $2.36$ points on CIFAR-100 and $4.48$ points on ImageNet-1K. Its performance also remains competitive against the broader optimizer set: SOAP gives the best GPT-2 result, RODE-SNR leads on the Qwen2-style LM and ImageNet-1K, and RODE leads on CIFAR-100. All comparisons are made within task, with tuning and transfer sources indicated in Table~\ref{tab:main_results}.

The norm trajectories reveal a complementary aspect of RODE's behavior. Among non-SGD methods, a RODE variant attains the lowest observed final full-model norm on every task; on GPT-2, CIFAR-100, and ImageNet-1K, the two lowest values are both obtained by RODE variants (Table~\ref{tab:final_norms}). Muon RMS finishes at $2.5$--$5.7\times$ the norm of RODE. This pattern aligns with the motivation for explicit norm control: for scale-invariant weights, parameter norm governs effective directional progress, and sustained update-induced norm growth can progressively reduce subsequent angular steps \citep{vanlaarhoven2017l2,heo2021adamp,kosson2024rotational}. Across these experiments, improved task performance under RODE is accompanied by substantially more controlled norm evolution. Full-model norm includes all trainable parameters; Appendix~\ref{app:metric_definitions} gives the exact definition. Figure~\ref{fig:gpt2_main} shows the GPT-2 trajectories, with the Qwen2-style LM results reported in Appendix Figure~\ref{app:fig:qwen114m}.

\begin{table}[htbp]
	\caption{Final observed full-model weight norm at the end of training (three-seed mean $\pm$ sample SD). Values include every trainable parameter, including tensors delegated to AdamW, and are comparable only within a task. A dash indicates that the task--optimizer combination was not run; values are descriptive rather than ranked.}
	\label{tab:final_norms}
	\centering
	\normalsize
	\setlength{\tabcolsep}{4pt}
	\begin{tabular}{lcccc}
		\toprule
		Optimizer & GPT-2 & Qwen2-style LM & CIFAR-100 & ImageNet-1K \\
		\midrule
		AdamW & $413.6\pm3.0$ & $711.1\pm8.7$ & $355.9\pm0.7$ & $2049.5\pm34.7$ \\
		Muon Original & $1187.9\pm0.8$ & $1314.7\pm1.9$ & $1302.6\pm0.6$ & $4889.3\pm0.5$ \\
		Muon RMS & $1074.4\pm0.8$ & $1722.4\pm2.9$ & $1142.0\pm1.2$ & $4547.4\pm6.9$ \\
		SGD & -- & -- & $29.8\pm0.7$ & $56.6\pm0.1$ \\
		RODE & $315.6\pm6.6$ & $683.5\pm2.6$ & $348.6\pm0.3$ & $801.8\pm27.6$ \\
		RODE-SNR & $269.1\pm0.3$ & $695.8\pm4.1$ & $272.8\pm0.1$ & $320.6\pm0.1$ \\
		SOAP & $1440.1\pm1.9$ & $1046.9\pm2.6$ & $1136.0\pm1.7$ & -- \\
		MARS-M & $2220.0\pm8.0$ & $1417.6\pm3.8$ & $23540.5\pm470.3$ & -- \\
		AdEMAMix & $454.0\pm1.5$ & $695.6\pm2.4$ & $8375.5\pm974.9$ & -- \\
		\bottomrule
	\end{tabular}
\end{table}

\begin{figure}[tbp]
	\centering
	\begin{minipage}{\panelwidth}\centering\includegraphics[width=\linewidth]{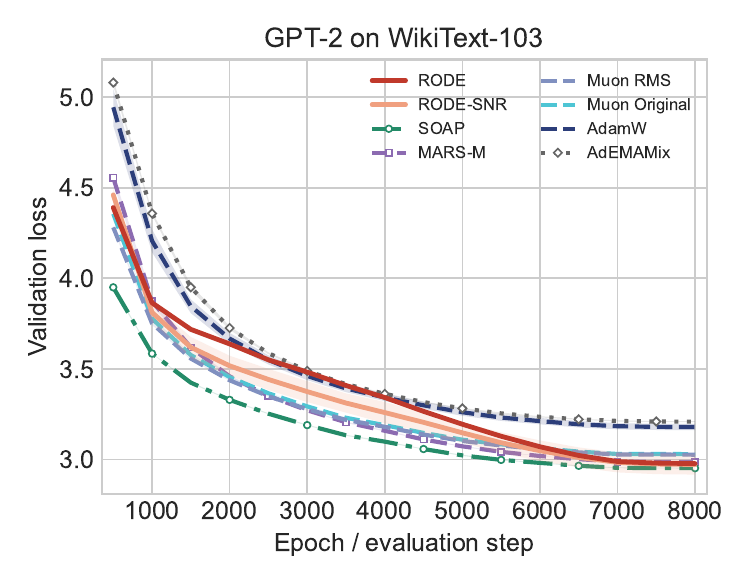}\end{minipage}\panelgap
	\begin{minipage}{\panelwidth}\centering\includegraphics[width=\linewidth]{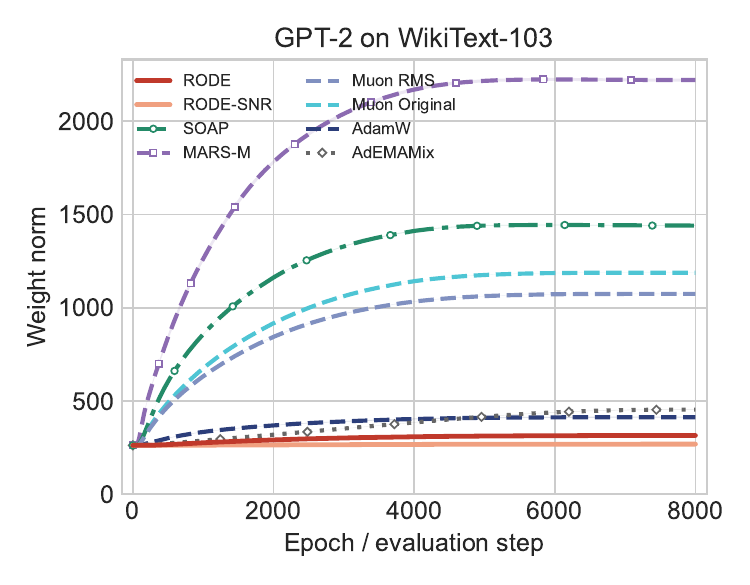}\end{minipage}
		\caption{Standard GPT-2/WikiText-103 over 8000 steps (three-seed mean $\pm$ sample SD): validation loss (left) and observed full-model weight norm (right). The norm aggregates all trainable parameters, including those delegated to AdamW; both RODE variants show substantially less growth than the Muon runs.}
	\label{fig:gpt2_main}
\end{figure}

\subsection{Separating norm control from directional conditioning}
\label{sec:norm_direction_intervention}

Lower norms alone do not establish an optimization benefit, so Table~\ref{tab:norm_direction_intervention} introduces the radial and directional interventions sequentially. Fixing the radii of Muon-managed matrices lowers validation loss from $3.027$ to $3.008$. Replacing Muon's additive directional update with RODE's tangent directional update further lowers the loss to $2.977$, while full RODE with learned radii reaches the same level. These interventions show a measurable benefit from explicit norm control and a further gain from RODE's directional update. Muon RMS and full RODE use learning rates selected from their respective GPT-2 sweeps; Muon + NormProj inherits the Muon RMS settings, and fixed-radius RODE inherits the full RODE settings.

\begin{table}[htbp]
	\caption{Sequential GPT-2 norm--direction intervention. Three-seed values are mean $\pm$ sample SD; $\Delta$ is relative to Muon RMS.}
	\label{tab:norm_direction_intervention}
	\centering
	\normalsize
	\setlength{\tabcolsep}{4pt}
	\begin{tabular}{llllc}
		\toprule
			Condition & Direction update & Radius policy & Val. loss $\downarrow$ & $\Delta$ vs.\ Muon \\
		\midrule
			Muon RMS & additive Muon & implicit & $3.027\pm0.003$ & -- \\
			Muon + NormProj & additive Muon & fixed by projection & $3.008\pm0.001$ & $-0.019$ \\
			RODE, fixed $\rho$ & RODE tangent stack & fixed & $2.977\pm0.002$ & $-0.050$ \\
			Full RODE & RODE tangent stack & learned scalar & $2.977\pm0.001$ & $-0.050$ \\
		\bottomrule
	\end{tabular}
\end{table}

\subsection{Hyperparameter transfer at 1.5B scale}

We train a Qwen2-style 1.5B decoder for 20000 FineWeb steps, transferring the smaller model's rates without a new sweep (Table~\ref{tab:scaling_main}). With the transferred rates, RODE achieves lower loss and a substantially lower global norm than Muon RMS. Fixed-radius RODE improves both measures further. Figure~\ref{fig:scaling_main} shows the trajectories; Appendices~\ref{app:scaling_results} and~\ref{app:hyperparams} give the cache construction and complete protocol.

\begin{table}[htbp]
	\caption{1.5B scale transfer with rates selected on the smaller Qwen2-style LM (three seeds, mean $\pm$ sample SD).}
	\label{tab:scaling_main}
	\centering
	\normalsize
	\begin{tabular}{lccc}
		\toprule
		Optimizer & Best cache loss $\downarrow$ & Final cache loss $\downarrow$ & Final observed global norm \\
		\midrule
		RODE, fixed $\rho$ & $\mathbf{3.13670\pm0.00406}$ & $\mathbf{3.13670\pm0.00406}$ & $1743\pm7$ \\
		RODE & $3.34568\pm0.01108$ & $3.34571\pm0.01109$ & $2183\pm2$ \\
		Muon RMS & $4.14465\pm0.01800$ & $4.14478\pm0.01806$ & $11964\pm32$ \\
		\bottomrule
	\end{tabular}
\end{table}

\begin{figure}[htbp]
	\centering
	\begin{minipage}{\panelwidth}\centering\includegraphics[width=\linewidth]{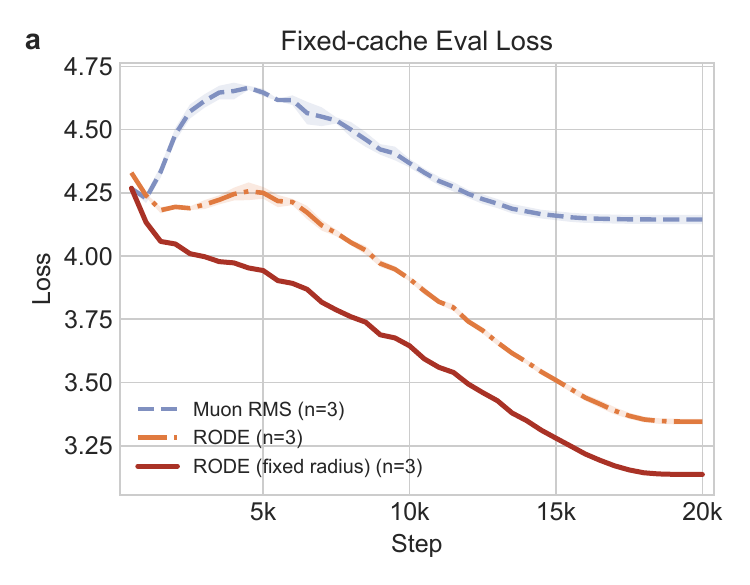}\end{minipage}\panelgap
	\begin{minipage}{\panelwidth}\centering\includegraphics[width=\linewidth]{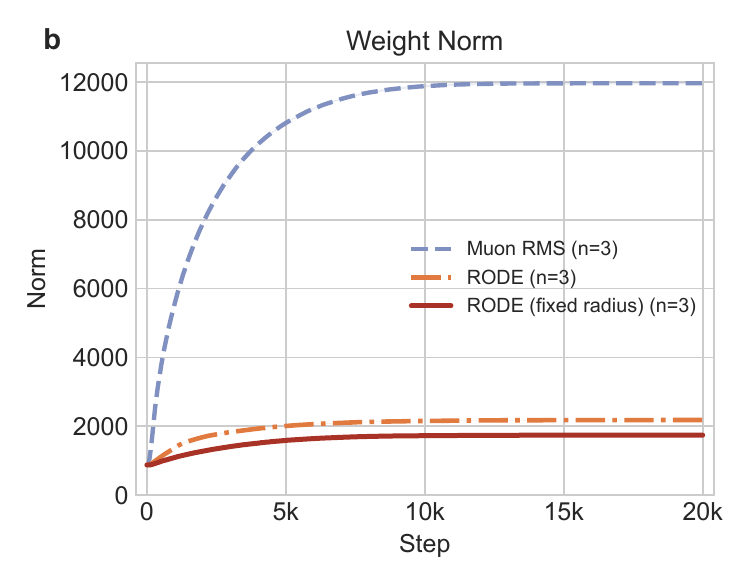}\end{minipage}
	\caption{Qwen2-style 1.5B/FineWeb scale transfer over 20000 steps (three-seed mean $\pm$ sample SD): unsmoothed fixed-cache loss and observed global weight norm. Fixed-radius RODE freezes managed matrix radii; parameters delegated to AdamW can still change the global norm.}
	\label{fig:scaling_main}
\end{figure}

\subsection{Full-parameter fine-tuning at 9B scale}
\label{sec:qwen35_9b_benchmark}

We full-parameter fine-tune Qwen3.5-9B on GSM8K \citep{cobbe2021gsm8k}. Each of the six optimizers receives the same tuning budget of 20 trials of 800 steps, followed by three 3000-step runs with distinct seeds; the minimum-validation-loss checkpoint from each run is evaluated on GSM8K, MATH-500 \citep{hendrycks2021math}, MMLU-STEM \citep{hendrycks2021mmlu}, and MMLU-Pro Math \citep{wang2024mmlupro}, as reported in Table~\ref{tab:qwen35_9b_benchmark}. Appendix~\ref{app:hyperparams} gives the complete training protocol, search spaces, selected learning rates, parameter routing, and task definitions.

\begin{table}[htbp]
	\caption{Qwen3.5-9B full-parameter fine-tuning and task evaluation (three-seed mean $\pm$ sample SD). All task columns report accuracy in percent. Each optimizer is independently tuned, and downstream evaluation uses the minimum-validation-loss checkpoint from each run. Best mean per column is bold.}
	\label{tab:qwen35_9b_benchmark}
	\centering
	\normalsize
	\setlength{\tabcolsep}{2.5pt}
	\begin{tabular}{lccccc}
		\toprule
		Optimizer & Best val. loss $\downarrow$ & GSM8K $\uparrow$ & MATH-500 $\uparrow$ & MMLU-STEM $\uparrow$ & MMLU-Pro Math $\uparrow$ \\
		\midrule
		AdamW & $0.3490 \pm 0.0005$ & $79.58 \pm 0.52$ & $44.67 \pm 0.70$ & $75.53 \pm 0.36$ & $40.07 \pm 0.65$ \\
		Muon Original & $0.3497 \pm 0.0005$ & $84.23 \pm 0.80$ & $48.60 \pm 3.47$ & $75.77 \pm 0.91$ & $39.97 \pm 1.35$ \\
		Muon RMS & $0.3455 \pm 0.0005$ & $84.03 \pm 0.98$ & $53.40 \pm 1.51$ & $76.24 \pm 0.91$ & $41.77 \pm 1.98$ \\
		Adafactor & $0.3435 \pm 0.0021$ & $80.87 \pm 2.18$ & $54.33 \pm 1.94$ & $\mathbf{77.63 \pm 0.14}$ & $\mathbf{43.35 \pm 0.86}$ \\
		SOAP & $\mathbf{0.3434 \pm 0.0011}$ & $81.68 \pm 1.98$ & $48.13 \pm 0.83$ & $76.22 \pm 0.54$ & $41.40 \pm 1.04$ \\
		RODE & $0.3492 \pm 0.0022$ & $\mathbf{85.47 \pm 0.46}$ & $\mathbf{54.53 \pm 0.95}$ & $76.75 \pm 1.14$ & $41.87 \pm 1.26$ \\
		\bottomrule
	\end{tabular}
\end{table}

RODE outperforms both Muon variants on all four evaluation tasks. Across all six optimizers, it achieves the highest mean on GSM8K and MATH-500, while Adafactor leads on MMLU-STEM and MMLU-Pro Math. Appendix Figure~\ref{app:fig:qwen35_9b_diagnostics} provides the full tuning and validation-loss trajectories.

\subsection{Transfer to computer vision}

With transferred hyperparameters, RODE reaches $73.75\pm0.29$ best accuracy on CIFAR-100 and $69.74\pm0.47$ on ImageNet-1K, exceeding both Muon variants; RODE-SNR has the highest ImageNet-1K mean. RODE's final norms are $348.6\pm0.3$ and $801.8\pm27.6$, versus $1142.0\pm1.2$ and $4547.4\pm6.9$ for Muon RMS. Thus the joint accuracy and norm behavior transfers beyond language models (Figure~\ref{fig:vision_main}); Appendix~\ref{app:additional_figures} gives norm trajectories.

\begin{figure}[tbp]
	\centering
	\begin{minipage}{\panelwidth}\centering\includegraphics[width=\linewidth]{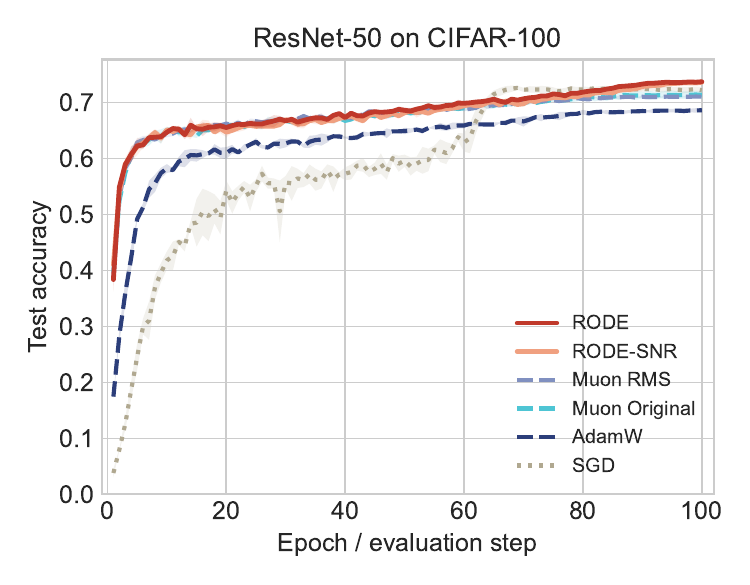}\end{minipage}\panelgap
	\begin{minipage}{\panelwidth}\centering\includegraphics[width=\linewidth]{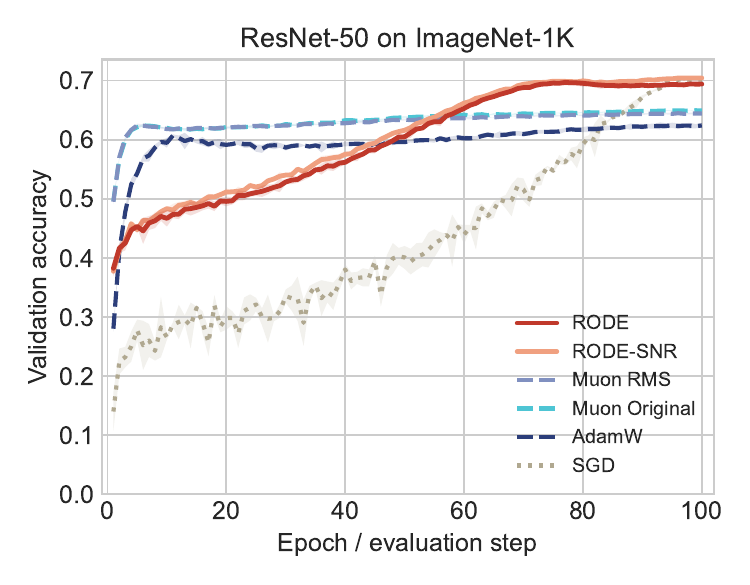}\end{minipage}
	\caption{ResNet-50 test accuracy on CIFAR-100 (left) and validation accuracy on ImageNet-1K (right), each over three seeds.}
	\label{fig:vision_main}
\end{figure}

\subsection{Component necessity}
\label{sec:ablation}

As shown in Table~\ref{tab:rode_ablation}, all six Qwen2-style-LM variants follow the same 4000-step protocol. Structural variants inherit RODE's learning rates, while RODE-SNR is independently tuned. Removing projected Newton--Schulz conditioning or directional learning substantially degrades performance, confirming that both are important components of RODE.

\begin{table}[htbp]
\caption{Three-seed 114M component study (mean $\pm$ sample SD). Structural ablations inherit RODE rates and are not retuned; RODE-SNR is an independently tuned diagnostic. $\Delta$ is relative to full RODE.}
\label{tab:rode_ablation}
\centering
\footnotesize
\setlength{\tabcolsep}{6pt}
\begin{tabular}{llcc}
\toprule
Variant & Intervention & Final val. $\downarrow$ & $\Delta$ \\
\midrule
Full RODE & Full engine & $2.7731 \pm 0.0013$ & -- \\
RODE-SNR & Add tangent-gradient ratio damping & \textbf{$2.7493 \pm 0.0011$} & $-0.0238$ \\
w/o Newton--Schulz & Remove projected NS conditioning & $3.1249 \pm 0.0113$ & $+0.3518$ \\
w/o geodesic & Use normalized retraction & $2.7736 \pm 0.0016$ & $+0.0005$ \\
RODE, radius fixed & Hold radius fixed & $2.7745 \pm 0.0017$ & $+0.0014$ \\
Radial-only control & Hold direction fixed & $3.8715 \pm 0.0023$ & $+1.0984$ \\
\bottomrule
\end{tabular}
\end{table}

\subsection{Systems cost}
\label{sec:compute}

The radial coordinate adds computation but no matrix-sized state. On the Qwen2-style-LM benchmark, RODE stores $611$ MB, equal to Muon and $33\%$ below AdamW; shape-batched fusion raises throughput from $0.59\times$ to $0.72\times$ AdamW, versus $0.77\times$ for Muon RMS. On ResNet-50, fused RODE stores $102.4$ MB and reaches $0.64\times$, versus $0.70\times$ for Muon RMS. Table~\ref{tab:compute} gives controlled measurements; Appendix~\ref{app:fused} details the implementation.

\begin{table}[htbp]
	\caption{Compute and memory benchmark. Measurements use a single RTX 5090 D v2 with \texttt{bf16} autocast: the Qwen2-style LM uses micro-batch 16 and sequence length 512, while ResNet-50 uses batch 64 and $224\times224$ inputs. Each of three runs has 8 warm-up and 25 timed steps; Step and Opt. are medians of the three per-run medians. Peak is PyTorch peak allocated memory, State is exact optimizer-state tensor storage, Opt. includes \texttt{step()} and gradient clearing, and throughput is relative to AdamW.}
	\label{tab:compute}
	\centering
	\normalsize
	\setlength{\tabcolsep}{3.5pt}
	\begin{tabular}{llccccc}
		\toprule
		Task & Optimizer & State & Peak & Step & Opt. & Throughput \\
		 & & (MB) & (GB) & (ms) & (ms) & (vs.\ AdamW) \\
		\midrule
		Qwen2-style LM & AdamW & 913.0 & 10.92 & 102.0 & 9.0 & $1.00\times$ \\
		 & Muon Original & 611.0 & 10.62 & 131.3 & 37.7 & $0.78\times$ \\
		 & Muon RMS & 611.0 & 10.62 & 132.4 & 39.1 & $0.77\times$ \\
		 & SOAP & 2643.2 & 12.67 & 137.3 & 42.1 & $0.74\times$ \\
		 & MARS-M & 913.0 & 10.96 & 158.9 & 68.1 & $0.64\times$ \\
		 & AdEMAMix & 1369.6 & 11.38 & 118.9 & 23.1 & $0.86\times$ \\
		 & RODE (loop) & 611.0 & 10.62 & 172.3 & 81.2 & $0.59\times$ \\
		 & RODE (fused) & 611.0 & 10.63 & 142.2 & 47.9 & $0.72\times$ \\
		\midrule
		ResNet-50 & AdamW & 204.5 & 3.27 & 41.3 & 2.5 & $1.00\times$ \\
		 & Muon Original & 102.4 & 3.26 & 59.0 & 20.2 & $0.70\times$ \\
		 & Muon RMS & 102.4 & 3.26 & 58.9 & 20.2 & $0.70\times$ \\
		 & SOAP & 630.3 & 3.71 & 74.0 & 34.7 & $0.56\times$ \\
		 & MARS-M & 204.5 & 3.28 & 201.6 & 161.8 & $0.20\times$ \\
		 & AdEMAMix & 306.7 & 3.37 & 58.2 & 18.6 & $0.71\times$ \\
		 & RODE (loop) & 102.4 & 3.17 & 89.5 & 50.4 & $0.46\times$ \\
		 & RODE (fused) & 102.4 & 3.17 & 64.8 & 24.7 & $0.64\times$ \\
		\bottomrule
	\end{tabular}
\end{table}

\section{Discussion}
\label{sec:discussion}

\paragraph{What RODE contributes.}
RODE combines a learned matrix-radius policy, a separately tuned directional policy, and matrix conditioning retained for direction. It turns two effects jointly determined by direct matrix addition into visible optimizer choices.

\paragraph{Why radius control matters.}
An additive step makes norm both an outcome of directional learning and a determinant of later angular progress. Growth can therefore suppress directional motion, while weight decay regulates the interaction only indirectly \citep{vanlaarhoven2017l2,hoffer2018norm,kosson2024rotational}. RODE gives the radial signal its own rule and removes the explicit $1/\rho_t$ factor from directional step size. Its goal is not to minimize norm, but to keep unintended norm growth from silently governing directional optimization.

\paragraph{What the evidence supports.}
RODE reduces observed norm growth while improving over both Muon variants in every main language-modeling and vision comparison. The GPT-2 intervention attributes gains to radial control and then the directional engine; in the 1.5B hyperparameter-transfer study, learned-radius RODE improves both loss and global norm, and fixed radius improves further. At 9B scale, each of the six optimizers is tuned with the same budget, then trained and evaluated under the same settings. RODE outperforms both Muon variants on GSM8K, MATH-500, MMLU-STEM, and MMLU-Pro Math; among all six optimizers, it attains the highest mean on GSM8K and MATH-500. Together, these results support explicit norm control as an effective mechanism in matrix optimization.

\paragraph{Limitations.}
RODE currently applies its radial--directional update only to matrix-valued parameters, while the remaining parameters are delegated to AdamW. The additional tangent projections and directional updates introduce extra computational overhead. We evaluate full-parameter fine-tuning of Qwen3.5-9B and training of a randomly initialized Qwen2-style 1.5B model, but the current study covers a limited set of language and vision architectures and training budgets; broader scales and model families remain to be explored. The theoretical convergence guarantees and rates are established under idealized regularity and alignment assumptions.

\section{Conclusion}
\label{sec:conclusion}

Direct additive matrix updates couple weight-norm evolution with directional progress, allowing norm growth to alter subsequent optimization dynamics. RODE separates these effects by treating the matrix Frobenius norm and the matrix-conditioned direction as distinct optimization coordinates with independent update rules and step sizes. Controlled interventions isolate gains from both radial control and directional learning, while experiments across language modeling and image classification show consistent improvements over Muon variants together with substantially more controlled norm growth. The 1.5B hyperparameter-transfer study further shows that these gains persist when learning rates are transferred from a smaller model, with fixed-radius RODE improving further. In Qwen3.5-9B full-parameter fine-tuning, RODE outperforms both Muon variants on all four evaluation tasks and leads the six-optimizer comparison on GSM8K and MATH-500. These results support radial--directional decoupling as an effective and practical design strategy for matrix-aware optimization.

\bibliographystyle{unsrtnat}
\FloatBarrier
\bibliography{references}

\clearpage
\appendix

\section{Complete update and fused implementation}
\label{app:update}

\subsection{Per-matrix update}

\begin{algorithm}[htbp]
	\caption{Default RODE update for one matrix parameter}
	\label{alg:rode_update}
	\footnotesize
	\centering
	\begin{tabular}{@{}c@{}}
		\toprule
		\begin{minipage}{0.96\linewidth}
		\textbf{Input:} parameter $W_t$, gradient $g_t$, raw buffer $B_{t-1}$, learning rates $\eta_\rho,\eta_{\mathrm{dir}}$, momentum $\beta$, floor $\kappa_{\mathrm{floor}}$, NS steps $K$, and safeguards $\epsilon,\epsilon_{\mathrm{NS}}$.
		\end{minipage} \\
		\midrule
		\begin{minipage}{0.96\linewidth}
		\begin{enumerate}
			\setlength{\itemsep}{2pt}
			\setlength{\parskip}{0pt}
			\item If $\|W_t\|_\fro>\epsilon$, form $\rho_t=\|W_t\|_\fro$, $U_t=W_t/\rho_t$, and $\mathcal P_t(X)=X-\langle X,U_t\rangle_\fro U_t$.
			\item Set $\widetilde B_t=\beta\mathcal P_t(B_{t-1})+g_t$.
			\item Set $s_t=\langle\widetilde B_t,U_t\rangle_\fro$ and $\rho_{t+1}=\max\{\rho_t-\eta_\rho s_t,\kappa_{\mathrm{floor}}\rho_t\}$.
			\item Set $M_t^\perp=\mathcal P_t(\widetilde B_t)=\widetilde B_t-s_tU_t$.
			\item Transpose $M_t^\perp$ and $U_t$ if needed so rows $\ge$ columns; initialize $X_0=M_t^\perp/\|M_t^\perp\|_\fro$ in the nonzero branch.
			\item For $k=0,\ldots,K-1$, set
			\[
			X_{k+1}=\mathcal P_t\!\left[3.4445X_k+X_k\{-4.7750X_k^\top X_k+2.0315(X_k^\top X_k)^2\}\right].
			\]
			Undo any transpose and set $\widehat Q_t=X_K$.
			\item For $q_t=\|\widehat Q_t\|_\fro>0$, set $D_t^\perp=\widehat Q_t/q_t$, $\theta_t=\eta_{\mathrm{dir}}q_t$, and $U_{t+1}=U_t\cos\theta_t-D_t^\perp\sin\theta_t$. Renormalize $U_{t+1}$.
			\item Write $W_{t+1}=\rho_{t+1}U_{t+1}$ and store $B_t=\widetilde B_t$.
		\end{enumerate}
		\end{minipage} \\
		\bottomrule
	\end{tabular}
\end{algorithm}

Zero parameter norms and zero projected kernels take safeguarded no-direction branches. RODE-SNR changes $\theta_t$ to $\eta_{\mathrm{dir}}\gamma_tq_t$.

\subsection{Shape-batched implementation}
\label{app:fused}

The per-parameter form of RODE iterates over matrices in Python, issuing many small kernels for the split, projection, Newton--Schulz conditioning, and geodesic step. Table~\ref{tab:compute} shows $0.59\times$ AdamW throughput on the Qwen2-style-LM configuration.

The fused implementation removes the Python loop without changing the per-matrix computation. Within a parameter group, parameters are bucketed by shape; the $B$ matrices of a common shape $m\times n$ are stacked into a single tensor $\mathcal{W}\in\R^{B\times m\times n}$, and likewise their gradients and momentum buffers. Each step of the engine then becomes a batched operation that broadcasts over the leading dimension:
\begin{itemize}
  \item \textbf{Polar split.} $\rho=\|\mathcal{W}\|_\fro$ and $\mathcal{U}=\mathcal{W}/\rho$ use a Frobenius norm reduced over the last two dimensions, giving $\rho\in\R^{B\times 1\times 1}$.
  \item \textbf{Tangent momentum.} The projection $\mathcal{P}_{\mathcal U}(\mathcal{X})=\mathcal{X}-\langle\mathcal{X},\mathcal{U}\rangle\,\mathcal{U}$ uses a batched inner product (an elementwise product summed over the matrix dimensions, kept as $B\times 1\times 1$). The fused code first forms the raw buffer $\widetilde{\mathcal M}\leftarrow\beta\mathcal{P}_{\mathcal U}(\mathcal{M}_{\mathrm{old}})+\mathcal{G}$, then removes the radial component for the directional engine:
  $\mathcal{M}^{\perp}\leftarrow\mathcal{P}_{\mathcal U}(\widetilde{\mathcal M})=\beta\mathcal{P}_{\mathcal U}(\mathcal{M}_{\mathrm{old}})+\mathcal{P}_{\mathcal U}(\mathcal{G})$. The raw buffer $\widetilde{\mathcal M}$ is what is written back; the next step begins by reprojecting it to the then-current tangent space.
  \item \textbf{Newton--Schulz.} The quintic iteration is batched matrix multiplication: $\mathcal{A}=\mathcal{X}^\top\mathcal{X}$ and $\mathcal{X}\!\leftarrow\!3.4445\,\mathcal{X}+\mathcal{X}\,(-4.7750\,\mathcal{A}+2.0315\,\mathcal{A}^2)$ are all batched \texttt{bmm}s, with the tangent reprojection applied batched after each iteration. Transposition (the $m<n$ case) is uniform within a bucket.
  \item \textbf{Geodesic step.} $\cos$ and $\sin$ of the per-matrix angle $\theta\in\R^{B\times 1\times 1}$ are applied elementwise, and the updated matrices are written back in bulk with \texttt{torch.\_foreach\_copy\_}.
\end{itemize}
Matrices of different shapes form separate buckets. Loop and batched implementations agree to \texttt{bf16} numerical tolerance ($9\times10^{-4}$ in a five-step check). Batching raises Qwen2-style-LM throughput from $0.59\times$ to $0.72\times$ AdamW (Table~\ref{tab:compute}), compared with $0.77\times$ for Muon RMS at equal state memory. RODE performs additional projection and geodesic operations in this comparison.

\section{Optimizer Definitions and Hyperparameters}
\label{app:hyperparams}

This appendix gives the exact optimizer routing, sweep budgets, and selected hyperparameters used in the reported experiments. We write $\eta_{\mathrm{2d/dir}}$ for the matrix-parameter learning rate; for RODE-family optimizers this is the angular learning rate $\eta_{\mathrm{dir}}$, while for non-RODE matrix optimizers it is the ordinary matrix-parameter learning rate. We write $\eta_\rho$ for RODE's radial learning rate and $\eta_{\mathrm{1d}}$ for the non-matrix learning rate. A dash denotes an unused parameter. Unless stated otherwise, language runs use two-dimensional non-embedding, non-head parameters as the matrix set, with all remaining parameters routed to the listed non-matrix optimizer. Vision runs route all parameters with \texttt{ndim >= 2} to the matrix optimizer; for Muon baselines, convolutional tensors are reshaped to matrices before the Muon step.

\paragraph{Hyperparameter selection protocol.}
For each sweep group in Table~\ref{tab:sweep_budget}, all optimizers in that group use the listed number of Optuna trials, training horizon, data source, model configuration, evaluation protocol, and log-uniform search spaces. The Qwen2-style sweeps evaluate once at the end of the sweep horizon and select the trial with the lowest validation loss. The standard GPT-2 sweep selects the trial with the lowest mean training loss over the final 30 steps of its 300-step protocol. After selection, the chosen values are fixed for the formal multi-seed runs. When a row is marked as transferred in Tables~\ref{tab:gpt2_hyperparams}--\ref{tab:imagenet_hyperparams}, the target task is not re-swept: the hyperparameters are copied from the stated source sweep before running the target-task experiment. CIFAR-100, ImageNet-1K, and the Qwen2-style 1.5B FineWeb experiment are transfer evaluations, not task-specific tuning studies. The separate Qwen3.5-9B sweep and training protocol is specified below and in Table~\ref{tab:qwen35_9b_hyperparams}.

\begin{table}[tbp]
	\caption{Operational baseline definitions used in the code. Matrix weight decay is the task-level \texttt{--wd}; non-matrix AdamW uses weight decay $9\times10^{-4}$ unless the row states otherwise.}
	\label{tab:optimizer_definitions}
	\centering
	\normalsize
	\setlength{\tabcolsep}{3pt}
	\begin{tabular}{@{}>{\raggedright\arraybackslash}p{0.14\linewidth}>{\raggedright\arraybackslash}p{0.26\linewidth}>{\raggedright\arraybackslash}p{0.20\linewidth}>{\raggedright\arraybackslash}p{0.30\linewidth}@{}}
		\toprule
		Optimizer & Matrix-parameter update & Non-matrix update & Fixed optimizer constants \\
		\midrule
		AdamW & AdamW on matrix parameters & AdamW on all remaining parameters & betas $(0.9,0.95)$ \\
		Muon Original & Muon, original LR scaling & AdamW & momentum $0.95$; matrix wd $0$ \\
		Muon RMS & Muon, RMS-matched LR scaling & AdamW & momentum $0.95$; matrix wd $0$ \\
		RODE & Radial update + tangent Newton--Schulz geodesic step & AdamW & momentum $0.9$; NS steps $5$; $\kappa_{\mathrm{floor}}=10^{-6}$; $\epsilon=10^{-12}$ \\
		RODE-SNR & RODE Add tangent-gradient ratio damping & AdamW & RODE constants; ratio clamp $[0.01,1]$ \\
		SOAP & SOAP on matrix parameters & AdamW & betas $(0.95,0.95)$, precondition frequency $10$, max precondition dim $10000$ \\
		MARS-M & MARS-M on matrix parameters & AdamW & momentum $0.95$, NS steps $5$, $\gamma=0.025$, approximate mode \\
		AdEMAMix & AdEMAMix on matrix parameters & AdEMAMix on all remaining parameters & betas $(0.9,0.999,0.9999)$, $\alpha=2.0$ \\
		SGD & SGD on matrix parameters & SGD on all remaining parameters & momentum $0.9$ \\
		\bottomrule
	\end{tabular}
\end{table}

\begin{table}[tbp]
	\caption{Learning-rate sweep budgets and search spaces. All sweep ranges are log-uniform. The selected values from these sweeps are reported in the task-specific hyperparameter tables below. The added baselines are swept on the Qwen2-style LM/WikiText-103 benchmark and then transferred to standard GPT-2 and CIFAR-100.}
	\label{tab:sweep_budget}
	\centering
	\normalsize
	\setlength{\tabcolsep}{4pt}
	\begin{tabular}{@{}p{0.14\linewidth}p{0.17\linewidth}p{0.12\linewidth}p{0.49\linewidth}@{}}
		\toprule
		Source & Optimizer & Budget & Search space \\
		\midrule
		GPT-2 & AdamW & $40{\times}300$ & $\eta:[10^{-4},10^{-2}]$; weight decay $[10^{-4},10^{-1}]$ \\
		GPT-2 & Muon Original & $40{\times}300$ & $\eta_{\mathrm{2d/dir}}:[10^{-4},2{\times}10^{-1}]$; $\eta_{\mathrm{1d}}:[10^{-5},10^{-2}]$ \\
		GPT-2 & Muon RMS & $40{\times}300$ & $\eta_{\mathrm{2d/dir}}:[10^{-5},5{\times}10^{-2}]$; $\eta_{\mathrm{1d}}:[10^{-5},10^{-2}]$ \\
		GPT-2 & RODE/RODE-SNR & $40{\times}300$ & $\eta_{\mathrm{2d/dir}}:[10^{-4},5{\times}10^{-1}]$; $\eta_\rho:[10^{-5},10^{-1}]$; $\eta_{\mathrm{1d}}:[10^{-5},10^{-2}]$ \\
		Qwen2-style & AdamW & $30{\times}500$ & $\eta_{\mathrm{2d/dir}},\eta_{\mathrm{1d}}:[3{\times}10^{-5},3{\times}10^{-3}]$ \\
		Qwen2-style & Muon Original/Muon RMS & $30{\times}500$ & $\eta_{\mathrm{2d/dir}}:[5{\times}10^{-3},3{\times}10^{-1}]$; $\eta_{\mathrm{1d}}:[3{\times}10^{-5},3{\times}10^{-3}]$ \\
		Qwen2-style & RODE/RODE-SNR & $30{\times}500$ & $\eta_{\mathrm{2d/dir}}:[10^{-4},5{\times}10^{-2}]$; $\eta_\rho:[10^{-6},5{\times}10^{-3}]$; $\eta_{\mathrm{1d}}:[3{\times}10^{-5},3{\times}10^{-3}]$ \\
		Qwen2-style & SOAP & $30{\times}500$ & $\eta_{\mathrm{2d/dir}}:[3{\times}10^{-5},10^{-2}]$; $\eta_{\mathrm{1d}}:[3{\times}10^{-5},3{\times}10^{-3}]$ \\
		Qwen2-style & MARS-M & $30{\times}500$ & $\eta_{\mathrm{2d/dir}}:[3{\times}10^{-5},3{\times}10^{-2}]$; $\eta_{\mathrm{1d}}:[3{\times}10^{-5},3{\times}10^{-3}]$ \\
		Qwen2-style & AdEMAMix & $30{\times}500$ & $\eta_{\mathrm{2d/dir}},\eta_{\mathrm{1d}}:[3{\times}10^{-5},3{\times}10^{-3}]$ \\
		\bottomrule
	\end{tabular}
\end{table}

\begin{table}[tbp]
	\caption{Standard GPT-2/WikiText-103 hyperparameters for the 8000-step main benchmark. Runs use sequence length 512, effective batch size 128, micro-batch size 16, \texttt{bf16} autocast, evaluation every 500 steps, 100 warmup steps, and cosine decay.}
	\label{tab:gpt2_hyperparams}
	\centering
	\normalsize
	\setlength{\tabcolsep}{3pt}
	\begin{tabular}{@{}p{0.15\linewidth}ccccc p{0.15\linewidth}@{}}
		\toprule
		Optimizer & $\eta_{\mathrm{2d/dir}}$ & $\eta_\rho$ & $\eta_{\mathrm{1d}}$ & Matrix wd & 1D wd & Source \\
		\midrule
		AdamW & $4.483{\times}10^{-4}$ & -- & $4.483{\times}10^{-4}$ & $9.077{\times}10^{-4}$ & $9.077{\times}10^{-4}$ & GPT-2 sweep \\
		Muon Original & $1.264{\times}10^{-2}$ & -- & $1.314{\times}10^{-5}$ & $0$ & $9{\times}10^{-4}$ & GPT-2 sweep \\
		Muon RMS & $1.806{\times}10^{-3}$ & -- & $9.274{\times}10^{-4}$ & $0$ & $9{\times}10^{-4}$ & GPT-2 sweep \\
		RODE & $6.432{\times}10^{-4}$ & $1.502{\times}10^{-2}$ & $5.012{\times}10^{-3}$ & $0$ & $9{\times}10^{-4}$ & GPT-2 sweep \\
		RODE-SNR & $2.413{\times}10^{-3}$ & $1.288{\times}10^{-4}$ & $2.311{\times}10^{-4}$ & $0$ & $9{\times}10^{-4}$ & GPT-2 sweep \\
		SOAP & $2.900{\times}10^{-3}$ & -- & $2.591{\times}10^{-3}$ & $10^{-2}$ & $9{\times}10^{-4}$ & Qwen transfer \\
		MARS-M & $4.097{\times}10^{-3}$ & -- & $2.693{\times}10^{-3}$ & $10^{-2}$ & $9{\times}10^{-4}$ & Qwen transfer \\
		AdEMAMix & $2.237{\times}10^{-4}$ & -- & $1.779{\times}10^{-3}$ & $10^{-2}$ & $9{\times}10^{-4}$ & Qwen transfer \\
		\bottomrule
	\end{tabular}
\end{table}

\begin{table}[tbp]
	\caption{Qwen2-style LM/WikiText-103 hyperparameters for the aligned 4000-step main benchmark and component ablation. The model has 114M parameters. All runs use sequence length 512, effective batch size 128, micro-batch size 16, \texttt{bf16} autocast, the official validation split, evaluation every 200 steps, 200 warmup steps, and cosine decay. This locally instantiated architecture uses a separate protocol from Table~\ref{tab:gpt2_hyperparams}.}
	\label{tab:qwen2small_hyperparams}
	\centering
	\normalsize
	\setlength{\tabcolsep}{3pt}
	\begin{tabular}{@{}p{0.15\linewidth}ccccc p{0.15\linewidth}@{}}
		\toprule
		Optimizer & $\eta_{\mathrm{2d/dir}}$ & $\eta_\rho$ & $\eta_{\mathrm{1d}}$ & Matrix wd & 1D wd & Source \\
		\midrule
		AdamW & $1.544{\times}10^{-4}$ & -- & $2.988{\times}10^{-3}$ & $10^{-2}$ & $9{\times}10^{-4}$ & Qwen sweep \\
		Muon Original & $2.257{\times}10^{-2}$ & -- & $2.897{\times}10^{-3}$ & $0$ & $9{\times}10^{-4}$ & Qwen sweep \\
		Muon RMS & $5.160{\times}10^{-3}$ & -- & $2.913{\times}10^{-3}$ & $0$ & $9{\times}10^{-4}$ & Qwen sweep \\
		RODE & $1.206{\times}10^{-3}$ & $2.438{\times}10^{-3}$ & $2.915{\times}10^{-3}$ & $0$ & $9{\times}10^{-4}$ & Qwen sweep \\
		RODE-SNR & $4.544{\times}10^{-3}$ & $3.698{\times}10^{-5}$ & $2.974{\times}10^{-3}$ & $0$ & $9{\times}10^{-4}$ & Qwen sweep \\
		SOAP & $2.900{\times}10^{-3}$ & -- & $2.591{\times}10^{-3}$ & $10^{-2}$ & $9{\times}10^{-4}$ & Qwen sweep \\
		MARS-M & $4.097{\times}10^{-3}$ & -- & $2.693{\times}10^{-3}$ & $10^{-2}$ & $9{\times}10^{-4}$ & Qwen sweep \\
		AdEMAMix & $2.237{\times}10^{-4}$ & -- & $1.779{\times}10^{-3}$ & $10^{-2}$ & $9{\times}10^{-4}$ & Qwen sweep \\
		\bottomrule
	\end{tabular}
\end{table}

\begin{table}[tbp]
	\caption{Qwen2-style 1.5B/FineWeb transfer hyperparameters. This three-seed experiment reuses learning rates selected on the smaller Qwen2-style LM without a new sweep. Runs use streaming FineWeb \texttt{sample-100BT}, sequence length 1024, effective batch size 128, micro-batch size 2 per GPU, \texttt{bf16} parameters/autocast, 20000 optimizer steps, evaluation every 500 steps, checkpointing every 100 steps, 200 warmup steps, cosine decay, and \texttt{torch.compile(default)}. The fixed evaluation cache is identical across optimizers.}
	\label{tab:scaling_hyperparams}
	\centering
	\normalsize
	\setlength{\tabcolsep}{4pt}
	\begin{tabular}{@{}p{0.15\linewidth}cccccc p{0.17\linewidth}@{}}
		\toprule
		Optimizer & $\eta_{\mathrm{2d/dir}}$ & $\eta_\rho$ & $\eta_{\mathrm{1d}}$ & Matrix wd & 1D wd & Accum. & Implementation \\
		\midrule
		RODE, fixed $\rho$ & $1.206{\times}10^{-3}$ & $0$ & $2.915{\times}10^{-3}$ & $0$ & $9{\times}10^{-4}$ & $64$ & fused RODE, fixed radius \\
		RODE & $1.206{\times}10^{-3}$ & $2.438{\times}10^{-3}$ & $2.915{\times}10^{-3}$ & $0$ & $9{\times}10^{-4}$ & $64$ & fused RODE, chunk 2 \\
		Muon RMS & $5.160{\times}10^{-3}$ & -- & $2.913{\times}10^{-3}$ & $0$ & $9{\times}10^{-4}$ & $64$ & PyTorch Muon RMS \\
		\bottomrule
	\end{tabular}
\end{table}

\FloatBarrier

\paragraph{Qwen3.5-9B full-parameter fine-tuning.}
All six optimizers start from the same non-Base Qwen3.5-9B checkpoint and use the same materialized GSM8K split: 7099 training examples and 374 validation examples. Each optimizer receives 20 Optuna trials of 800 steps. The Adafactor, Muon Original, and SOAP studies enqueue three, four, and four fixed candidate settings, respectively, before continuing with log-uniform sampling; these candidates count toward the same 20-trial budget. Validation is run every 100 steps on all 374 examples, and the trial with the lowest observed validation loss supplies the learning rates for the formal runs. Formal training uses three distinct seeds for 3000 steps, sequence length 1024, micro-batch size 1, gradient accumulation 8, BF16 full-parameter training, gradient checkpointing, gradient clipping at norm 1.0, and zero weight decay. The schedule applies 3\% linear warmup followed by cosine decay. Evaluation is run every 100 steps, and the checkpoint with the minimum validation loss is saved for downstream evaluation.

The parameter routing in this experiment differs from the language-model default above: every trainable two-dimensional tensor, including any embedding and output-head matrices, belongs to the matrix set. Muon Original, Muon RMS, SOAP, and RODE update this set with their matrix rule and route all remaining tensors to AdamW with betas $(0.9,0.95)$. AdamW and Adafactor each update all trainable parameters with one optimizer. Here Adafactor uses an explicit learning rate with relative-step scaling, parameter scaling, and warmup initialization disabled. SOAP uses betas $(0.9,0.95)$, precondition frequency 100, maximum precondition dimension 1024, and disables dimension merging, one-dimensional preconditioning, and gradient normalization.

For downstream evaluation, every run loads its minimum-validation-loss checkpoint. GSM8K (1319 examples) and MATH-500 (500 examples) use greedy generation with at most 512 new tokens and normalized exact-answer matching. MMLU-STEM (3153 examples) and MMLU-Pro Math (1351 examples) use a fixed zero-shot multiple-choice prompt: each option letter is scored by its summed sequence log-likelihood, and the highest-scoring option is selected. Table~\ref{tab:qwen35_9b_benchmark} reports accuracy over each complete materialized evaluation set.

\begin{table}[tbp]
	\caption{Qwen3.5-9B learning-rate searches and selected values. Every optimizer receives 20 trials of 800 steps; all searched ranges are log-uniform. ``Same'' denotes a single learning rate applied to all trainable parameters.}
	\label{tab:qwen35_9b_hyperparams}
	\centering
	\normalsize
	\setlength{\tabcolsep}{3pt}
	\begin{tabular}{@{}>{\raggedright\arraybackslash}p{0.15\linewidth}>{\raggedright\arraybackslash}p{0.40\linewidth}ccc@{}}
		\toprule
		Optimizer & Search space & Selected $\eta_{\mathrm{2d/dir}}$ & Selected $\eta_\rho$ & Selected $\eta_{\mathrm{1d}}$ \\
		\midrule
		AdamW & $\eta:[3{\times}10^{-6},8{\times}10^{-5}]$ & $2.402{\times}10^{-5}$ & -- & Same \\
		Muon Original & $\eta_{\mathrm{2d/dir}}:[10^{-4},5{\times}10^{-2}]$; $\eta_{\mathrm{1d}}:[2{\times}10^{-6},10^{-4}]$ & $2.869{\times}10^{-4}$ & -- & $5.846{\times}10^{-6}$ \\
		Muon RMS & $\eta_{\mathrm{2d/dir}}:[3{\times}10^{-6},10^{-4}]$; $\eta_{\mathrm{1d}}=2{\times}10^{-5}$ fixed & $1.971{\times}10^{-5}$ & -- & $2.000{\times}10^{-5}$ \\
		Adafactor & $\eta:[10^{-6},10^{-4}]$ & $8.000{\times}10^{-6}$ & -- & Same \\
		SOAP & $\eta_{\mathrm{2d/dir}}:[10^{-5},8{\times}10^{-3}]$; $\eta_{\mathrm{1d}}:[2{\times}10^{-6},2{\times}10^{-4}]$ & $2.118{\times}10^{-5}$ & -- & $3.020{\times}10^{-6}$ \\
		RODE & $\eta_{\mathrm{2d/dir}}:[2{\times}10^{-6},3{\times}10^{-5}]$; $\eta_\rho:[10^{-6},5{\times}10^{-5}]$; $\eta_{\mathrm{1d}}:[3{\times}10^{-6},5{\times}10^{-5}]$ & $6.259{\times}10^{-6}$ & $2.919{\times}10^{-6}$ & $5.592{\times}10^{-6}$ \\
		\bottomrule
	\end{tabular}
\end{table}

\FloatBarrier

\begin{table}[tbp]
	\caption{ResNet-50/CIFAR-100 hyperparameters. Runs use 100 epochs, image size 224, effective batch size 128, three distinct seeds, and epoch-wise cosine annealing. The implemented scheduler is \texttt{CosineAnnealingLR(T\_max=epochs)} with no warmup stage. We report best test accuracy. No CIFAR-specific sweep is used: the original suite uses standard-GPT-2-selected settings, while the added baselines use settings selected on the Qwen2-style LM.}
	\label{tab:cifar_hyperparams}
	\centering
	\normalsize
	\setlength{\tabcolsep}{3pt}
	\begin{tabular}{@{}p{0.15\linewidth}ccccc p{0.15\linewidth}@{}}
		\toprule
		Optimizer & $\eta_{\mathrm{2d/dir}}$ & $\eta_\rho$ & $\eta_{\mathrm{1d}}$ & Matrix wd & 1D wd & Source \\
		\midrule
		AdamW & $4.483{\times}10^{-4}$ & -- & $4.483{\times}10^{-4}$ & $10^{-2}$ & $9{\times}10^{-4}$ & GPT-2 transfer \\
		Muon Original & $1.264{\times}10^{-2}$ & -- & $1.314{\times}10^{-5}$ & $0$ & $9{\times}10^{-4}$ & GPT-2 transfer \\
		Muon RMS & $1.806{\times}10^{-3}$ & -- & $9.274{\times}10^{-4}$ & $0$ & $9{\times}10^{-4}$ & GPT-2 transfer \\
		RODE & $6.432{\times}10^{-4}$ & $1.502{\times}10^{-2}$ & $5.012{\times}10^{-3}$ & $0$ & $9{\times}10^{-4}$ & GPT-2 transfer \\
		RODE-SNR & $2.413{\times}10^{-3}$ & $1.288{\times}10^{-4}$ & $2.311{\times}10^{-4}$ & $0$ & $9{\times}10^{-4}$ & GPT-2 transfer \\
		SOAP & $2.900{\times}10^{-3}$ & -- & $2.591{\times}10^{-3}$ & $10^{-2}$ & $9{\times}10^{-4}$ & Qwen transfer \\
		MARS-M & $4.097{\times}10^{-3}$ & -- & $2.693{\times}10^{-3}$ & $10^{-2}$ & $9{\times}10^{-4}$ & Qwen transfer \\
		AdEMAMix & $2.237{\times}10^{-4}$ & -- & $1.779{\times}10^{-3}$ & $10^{-2}$ & $9{\times}10^{-4}$ & Qwen transfer \\
		SGD & $1.0{\times}10^{-1}$ & -- & $1.0{\times}10^{-1}$ & $5{\times}10^{-4}$ & $5{\times}10^{-4}$ & standard \\
		\bottomrule
	\end{tabular}
\end{table}

\begin{table}[tbp]
	\caption{ResNet-50/ImageNet-1K hyperparameters. Runs use 100 epochs, image size 224, effective batch size 256, three distinct seeds, and epoch-wise cosine annealing without an implemented warmup stage. Table~\ref{tab:main_results} reports, for each seed, the best validation accuracy observed over 100 epochs. The added SOAP/MARS-M/AdEMAMix baselines are not run on ImageNet-1K.}
	\label{tab:imagenet_hyperparams}
	\centering
	\normalsize
	\setlength{\tabcolsep}{4pt}
	\begin{tabular}{@{}p{0.16\linewidth}ccccc p{0.13\linewidth}@{}}
		\toprule
		Optimizer & $\eta_{\mathrm{2d/dir}}$ & $\eta_\rho$ & $\eta_{\mathrm{1d}}$ & Matrix wd & 1D wd & Source \\
		\midrule
		AdamW & $4.483{\times}10^{-4}$ & -- & $4.483{\times}10^{-4}$ & $10^{-2}$ & $9{\times}10^{-4}$ & GPT-2 transfer \\
		Muon Original & $1.264{\times}10^{-2}$ & -- & $1.314{\times}10^{-5}$ & $0$ & $9{\times}10^{-4}$ & GPT-2 transfer \\
		Muon RMS & $1.806{\times}10^{-3}$ & -- & $9.274{\times}10^{-4}$ & $0$ & $9{\times}10^{-4}$ & GPT-2 transfer \\
		RODE & $6.432{\times}10^{-4}$ & $1.502{\times}10^{-2}$ & $5.012{\times}10^{-3}$ & $0$ & $9{\times}10^{-4}$ & GPT-2 transfer \\
		RODE-SNR & $2.413{\times}10^{-3}$ & $1.288{\times}10^{-4}$ & $2.311{\times}10^{-4}$ & $0$ & $9{\times}10^{-4}$ & GPT-2 transfer \\
		SGD & $1.0{\times}10^{-1}$ & -- & $1.0{\times}10^{-1}$ & $5{\times}10^{-4}$ & $5{\times}10^{-4}$ & standard \\
		\bottomrule
	\end{tabular}
\end{table}

\begin{table}[tbp]
	\caption{Ablation protocols. Each experiment uses three distinct seeds.}
	\label{tab:ablation_hyperparams}
	\centering
	\normalsize
	\setlength{\tabcolsep}{3pt}
	\begin{tabular}{@{}>{\raggedright\arraybackslash}p{0.17\linewidth}>{\raggedright\arraybackslash}p{0.22\linewidth}>{\raggedright\arraybackslash}p{0.25\linewidth}>{\raggedright\arraybackslash}p{0.26\linewidth}@{}}
		\toprule
		Experiment & Optimizers & Shared training protocol & Hyperparameter policy \\
		\midrule
		GPT-2 norm--direction intervention & Muon RMS, Muon + NormProj, RODE with fixed radius, full RODE & standard GPT-2/WikiText-103; 8000 steps; effective batch 128; 100 warmup steps; cosine decay; three distinct seeds & Muon + NormProj inherits Muon RMS rates; the fixed-radius control inherits RODE rates and holds each matrix radius at initialization \\
		RODE component ablation & RODE, RODE-SNR, fixed-radius control, radial-only control, w/o Newton--Schulz, w/o geodesic & Qwen2-style LM/WikiText-103; 114M parameters; 4000 steps; effective batch 128; 200 warmup steps; cosine decay; three distinct seeds & RODE-SNR uses its independently swept row in Table~\ref{tab:qwen2small_hyperparams}; structural variants inherit RODE's learning rates \\
		\bottomrule
	\end{tabular}
\end{table}

\clearpage

\section{Additional Experimental Diagnostics and Curves}
\label{app:additional_figures}

This appendix collects supporting diagnostics and full trajectories that complement the main results. The same plotting code and color mapping are used as in the main figures.

\subsection{Qwen2-style LM validation trajectories}

\begin{figure}[H]
	\centering
	\begin{minipage}{\panelwidth}\centering\includegraphics[width=\linewidth]{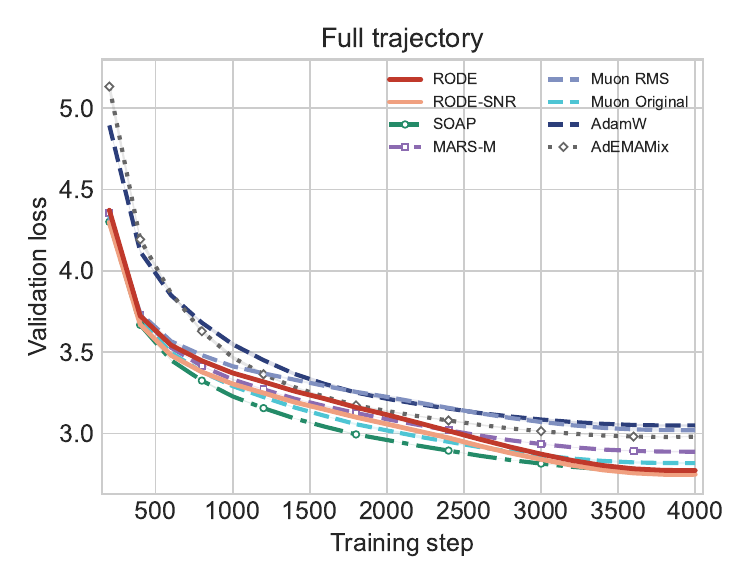}\end{minipage}\panelgap
	\begin{minipage}{\panelwidth}\centering\includegraphics[width=\linewidth]{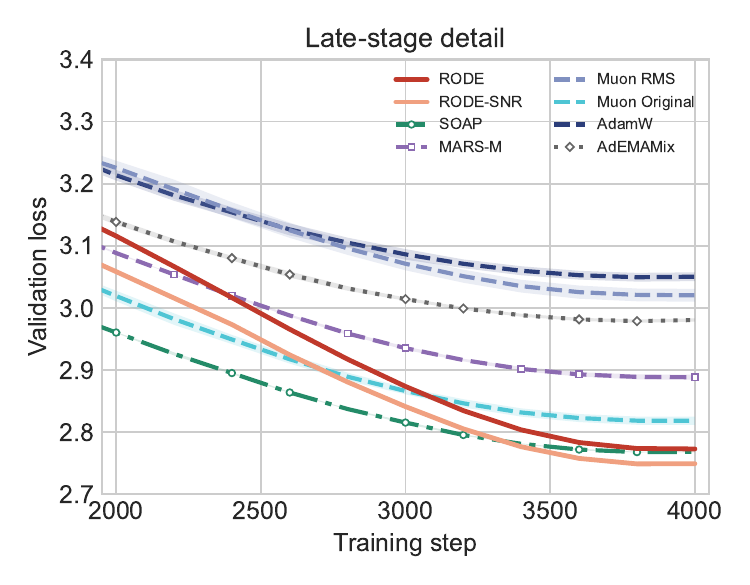}\end{minipage}
	\caption{Qwen2-style LM/WikiText-103 validation trajectories over 4000 steps (three-seed mean $\pm$ sample SD): full trajectory (left) and late-stage detail (right).}
	\label{app:fig:qwen114m}
\end{figure}

\subsection{Metric definitions}
\label{app:metric_definitions}

Diagnostics include every trainable parameter, including parameters delegated to AdamW. For the diagnostic parameter set $\mathcal P_{\mathrm{diag}}$,
\begin{align}
 \|W_t\|_{\mathrm{global}}
 &=\left(\sum_{p\in\mathcal P_{\mathrm{diag}}}\|p_t\|_\fro^2\right)^{1/2},\\
 \|\Delta W_t\|_{\mathrm{global}}
 &=\left(\sum_{p\in\mathcal P_{\mathrm{diag}}}\|p_{t+1}-p_t\|_\fro^2\right)^{1/2}.
\end{align}
The update ratio divides the latter by $\|W_t\|_{\mathrm{global}}+10^{-12}$. Gradient norm is the analogous root-sum-square immediately before the optimizer step. Only high-frequency training, gradient, and update diagnostics are EMA-smoothed for visualization; validation, accuracy, weight norm, schedules, and tabulated values use raw logs.

\subsection{1.5B scale-transfer protocol}
\label{app:scaling_results}

The 1.5B runs use sequence length 1024, effective batch size 128, 200 warmup steps, cosine decay, and learning rates selected on the smaller Qwen2-style LM. Fixed-radius RODE uses the same directional and delegated-parameter rates as full RODE, sets $\eta_\rho=0$, and holds each managed matrix's radius at initialization. All three methods use the same three distinct seeds for 20000 steps. The fixed evaluation cache contains 1024 length-1024 sequences built from FineWeb \texttt{sample-10BT}; document-level exclusion from the \texttt{sample-100BT} training stream is not enforced \citep{finewebdatasetcard}. Every optimizer is evaluated on the identical cache. Figure~\ref{fig:scaling_main} reports cache loss and global norm; Figure~\ref{app:fig:scaling_diagnostics} reports dense training diagnostics. Raw per-seed scalars and provenance are archived with the paper.

\begin{figure}[tbp]
	\centering
	\begin{minipage}{\panelwidth}\centering\includegraphics[width=\linewidth]{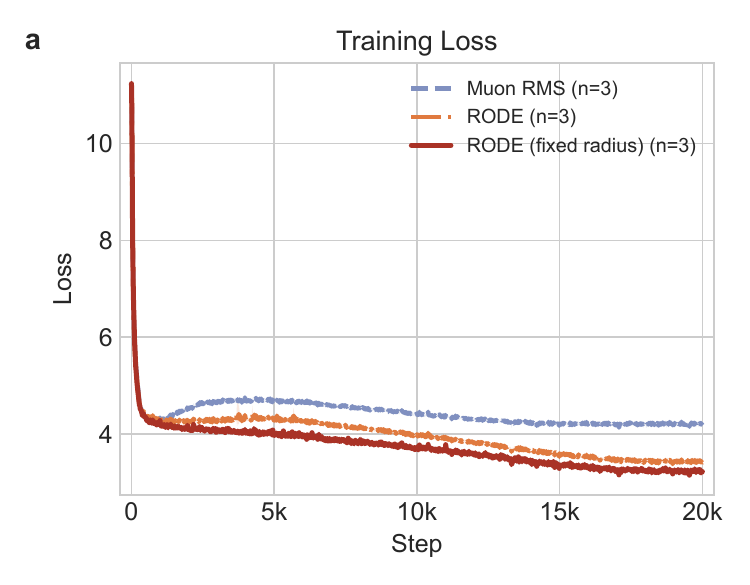}\end{minipage}\panelgap
	\begin{minipage}{\panelwidth}\centering\includegraphics[width=\linewidth]{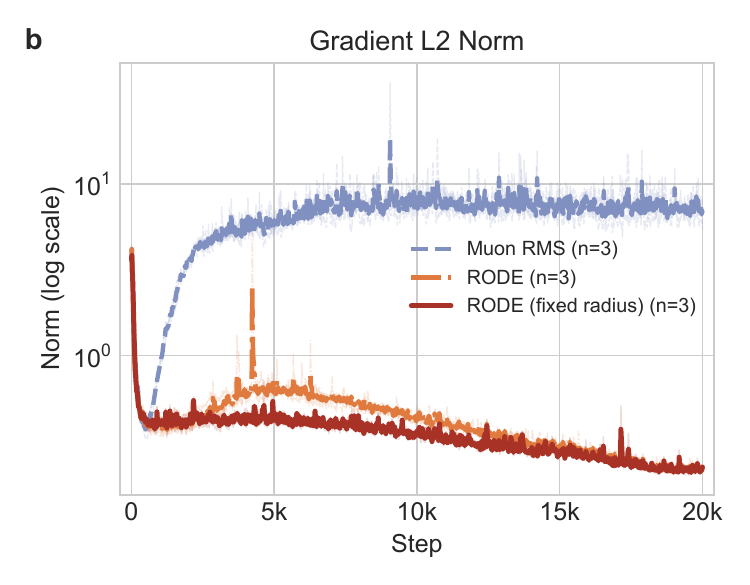}\end{minipage}
	\caption{Qwen2-style 1.5B/FineWeb training diagnostics over 20000 steps (three seeds per method): EMA-smoothed training loss (left) and gradient norm (right). Gradient norm uses a logarithmic axis.}
	\label{app:fig:scaling_diagnostics}
\end{figure}

\subsection{Qwen3.5-9B tuning and training diagnostics}

\begin{figure}[H]
	\centering
	\begin{minipage}{\panelwidth}\centering\includegraphics[width=\linewidth]{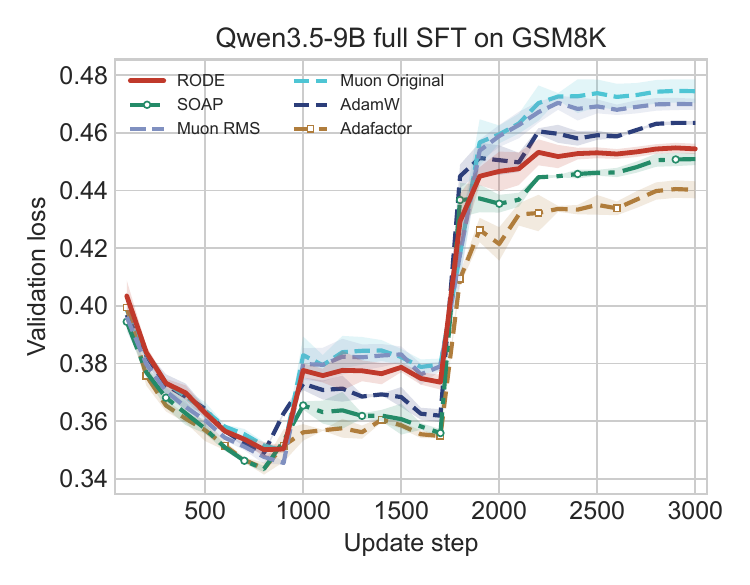}\end{minipage}\panelgap
	\begin{minipage}{\panelwidth}\centering\includegraphics[width=\linewidth]{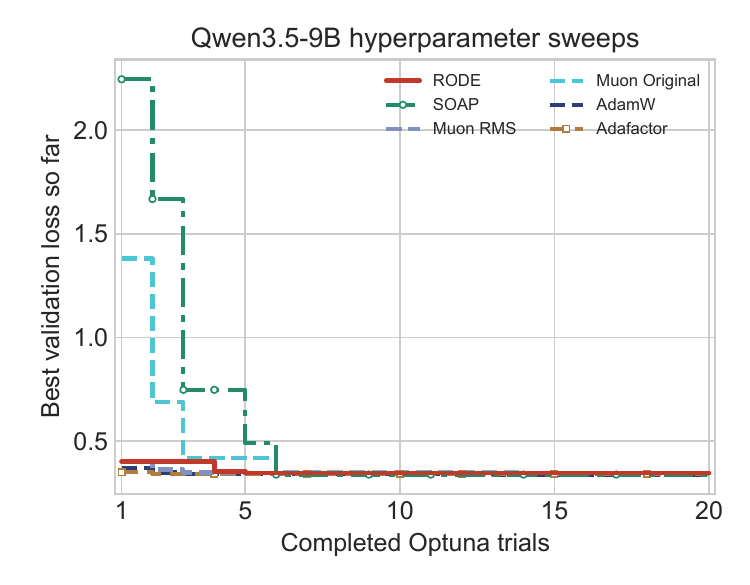}\end{minipage}
	\caption{Qwen3.5-9B full-parameter fine-tuning diagnostics for the six optimizers in Table~\ref{tab:qwen35_9b_benchmark}. Left: validation loss over the 3000-step formal runs (three-seed mean $\pm$ sample SD). Right: cumulative best objective over the 20 800-step Optuna trials for each optimizer; each trial's objective is its minimum validation loss. Both panels use raw, unsmoothed values.}
	\label{app:fig:qwen35_9b_diagnostics}
\end{figure}

\paragraph{GPT-2/WikiText-103 dynamics.}
Figure~\ref{app:fig:gpt2_supp} shows the training loss, update step size, and gradient norm for the GPT-2 runs; the GPT-2 weight norm is in the main text (Figure~\ref{fig:gpt2_main}).
\begin{figure}[tbp]
	\centering
	\begin{minipage}{\panelwidth}\centering\includegraphics[width=\linewidth]{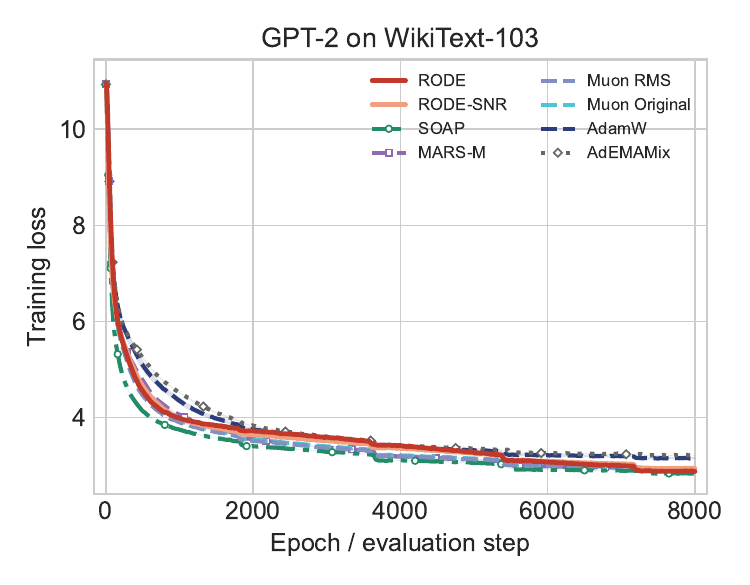}\end{minipage}\panelgap
	\begin{minipage}{\panelwidth}\centering\includegraphics[width=\linewidth]{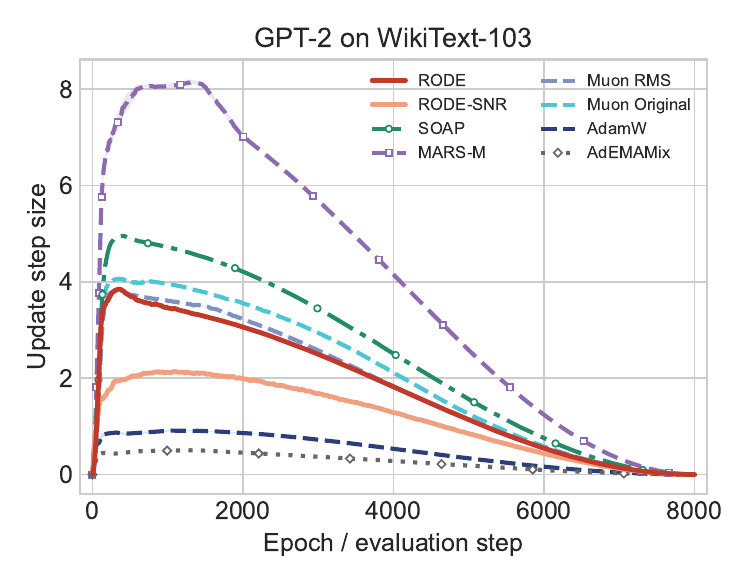}\end{minipage}\\[0.6em]
	\begin{minipage}{\panelwidth}\centering\includegraphics[width=\linewidth]{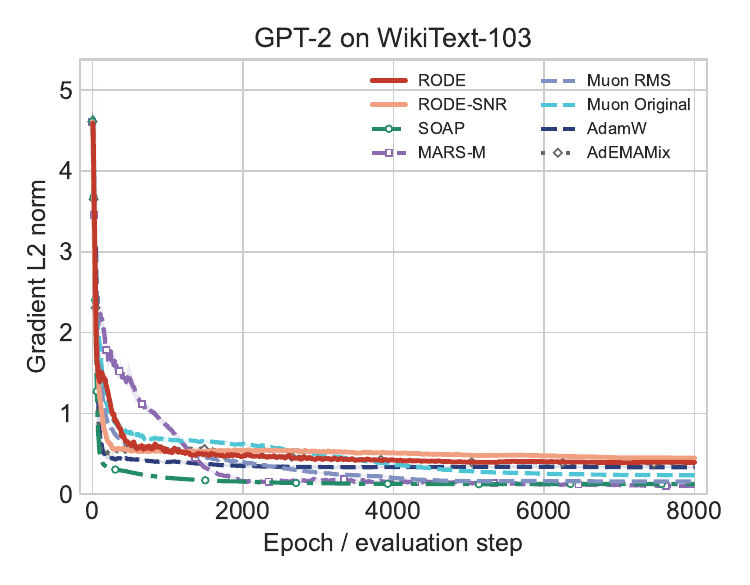}\end{minipage}
	\caption{GPT-2/WikiText-103 training dynamics (three seeds): training loss, update step size, and gradient norm.}
	\label{app:fig:gpt2_supp}
\end{figure}

\paragraph{ResNet-50 evaluation loss and weight norm.}
Figures~\ref{app:fig:vision_dyn} and~\ref{fig:vision_norm} show test loss for CIFAR-100, validation loss for ImageNet-1K, and observed full-model weight norm for both datasets.
\begin{figure}[tbp]
	\centering
	\begin{minipage}{\panelwidth}\centering\includegraphics[width=\linewidth]{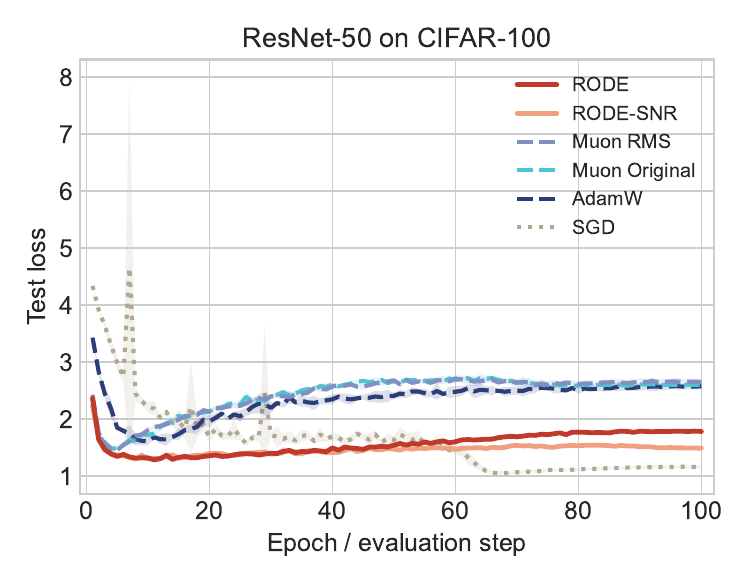}\end{minipage}\panelgap
	\begin{minipage}{\panelwidth}\centering\includegraphics[width=\linewidth]{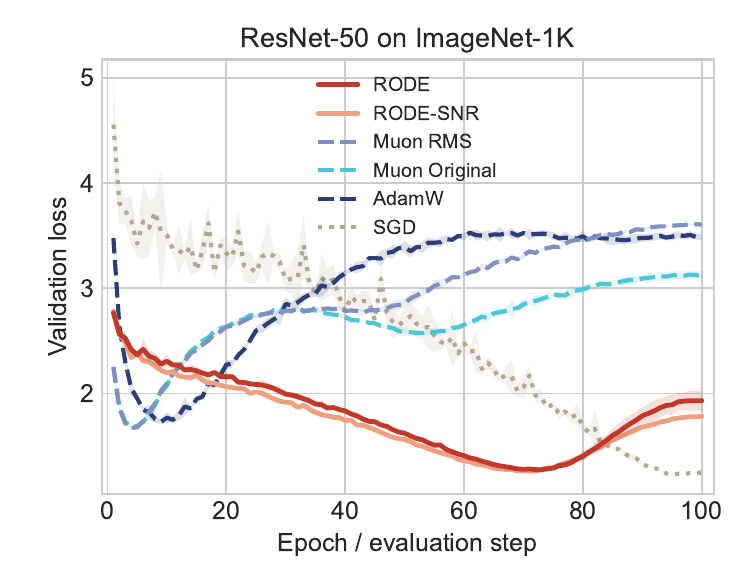}\end{minipage}
	\caption{ResNet-50 test loss on CIFAR-100 (left) and validation loss on ImageNet-1K (right), shown as three-seed mean $\pm$ sample SD.}
	\label{app:fig:vision_dyn}
\end{figure}

\begin{figure}[tbp]
	\centering
	\begin{minipage}{\panelwidth}\centering\includegraphics[width=\linewidth]{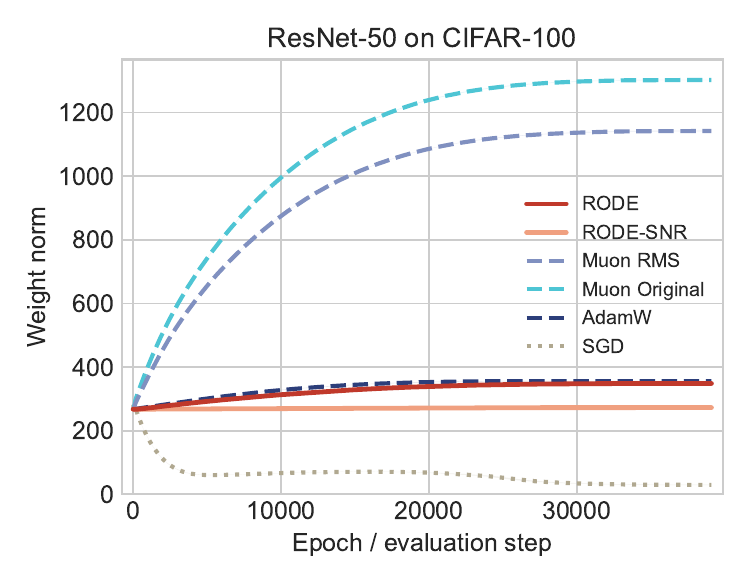}\end{minipage}\panelgap
	\begin{minipage}{\panelwidth}\centering\includegraphics[width=\linewidth]{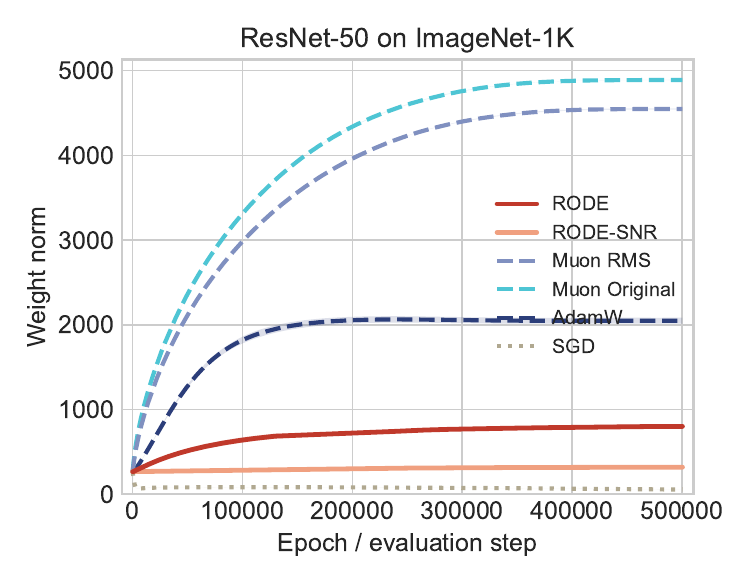}\end{minipage}
	\caption{Observed full-model weight norms on CIFAR-100 (left) and ImageNet-1K (right), shown as three-seed mean $\pm$ sample SD. RODE directly updates managed-matrix radii; these curves additionally include parameters delegated to AdamW.}
	\label{fig:vision_norm}
\end{figure}

\begin{figure}[tbp]
	\centering
	\begin{minipage}{\panelwidth}
		\centering
		\includegraphics[width=\linewidth]{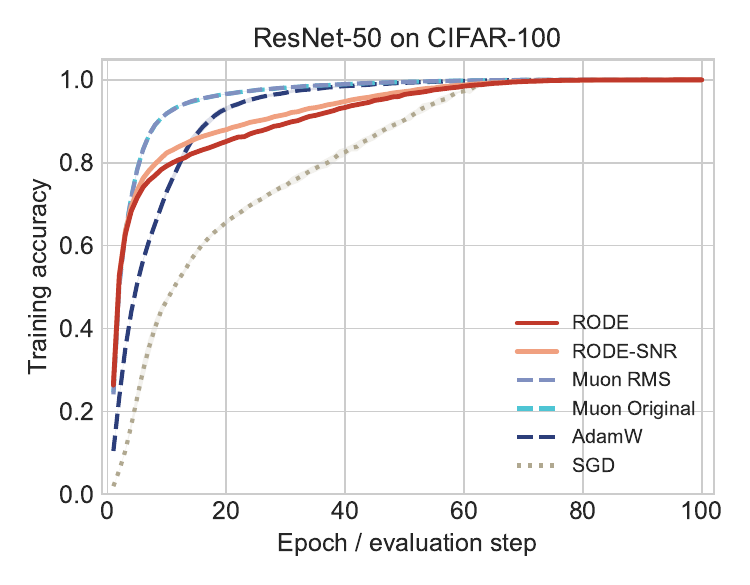}
	\end{minipage}
	\panelgap
	\begin{minipage}{\panelwidth}
		\centering
		\includegraphics[width=\linewidth]{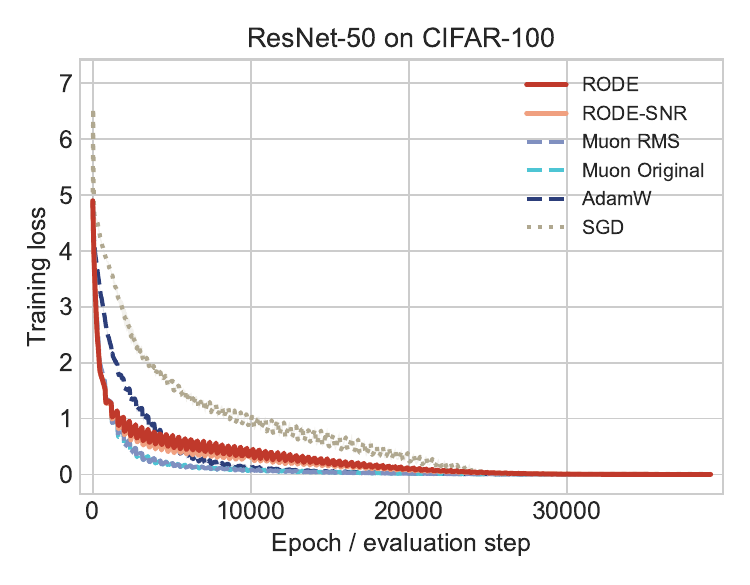}
	\end{minipage}\\[0.6em]
	\begin{minipage}{\panelwidth}
		\centering
		\includegraphics[width=\linewidth]{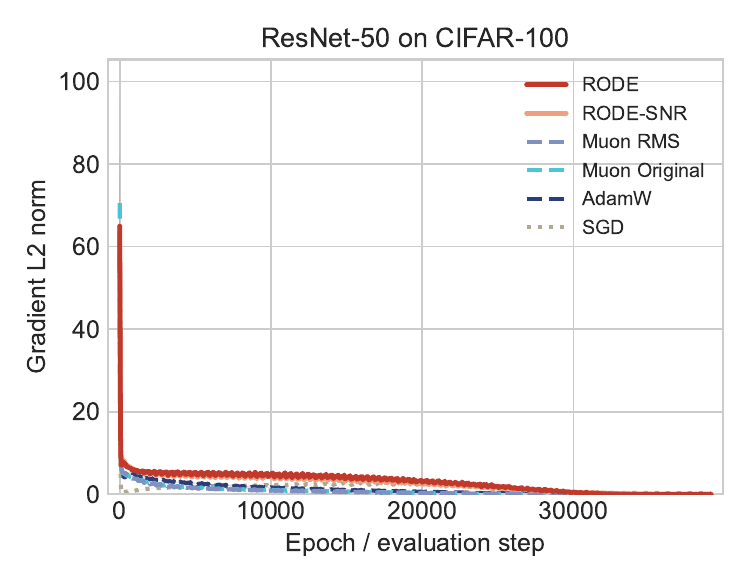}
	\end{minipage}
	\panelgap
	\begin{minipage}{\panelwidth}
		\centering
		\includegraphics[width=\linewidth]{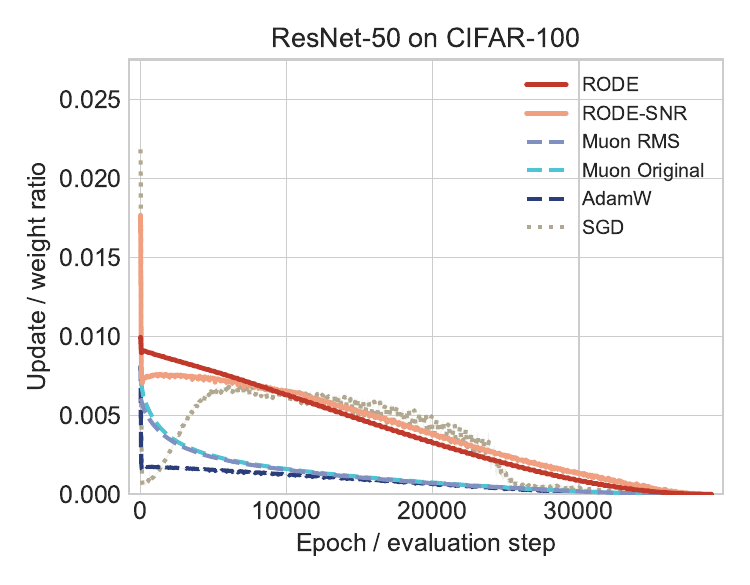}
	\end{minipage}
	\caption{Additional ResNet-50/CIFAR-100 curves: training accuracy, training loss, gradient norm, and update-to-weight ratio.}
	\label{app:fig:cifar_supp}
\end{figure}

\begin{figure}[tbp]
	\centering
	\begin{minipage}{\panelwidth}
		\centering
		\includegraphics[width=\linewidth]{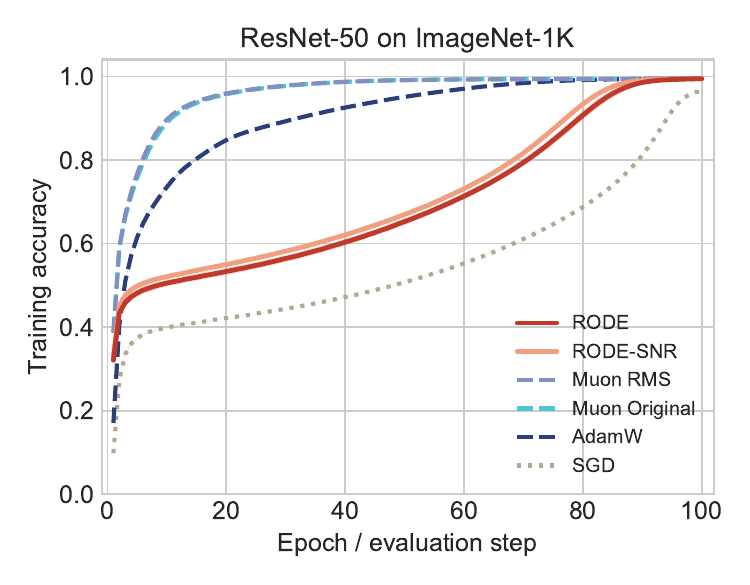}
	\end{minipage}
	\panelgap
	\begin{minipage}{\panelwidth}
		\centering
		\includegraphics[width=\linewidth]{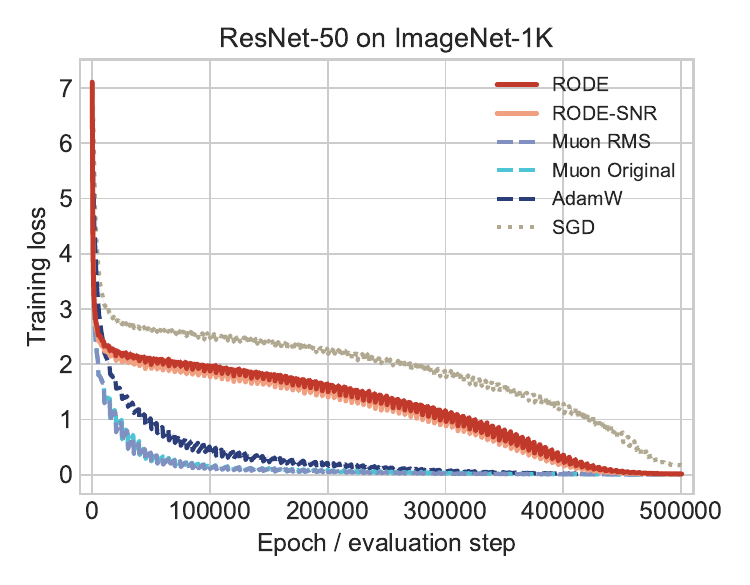}
	\end{minipage}\\[0.6em]
	\begin{minipage}{\panelwidth}
		\centering
		\includegraphics[width=\linewidth]{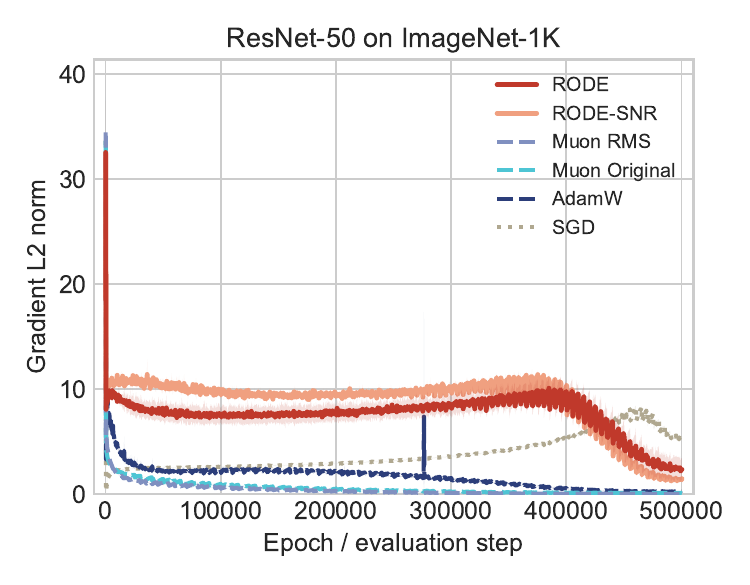}
	\end{minipage}
	\panelgap
	\begin{minipage}{\panelwidth}
		\centering
		\includegraphics[width=\linewidth]{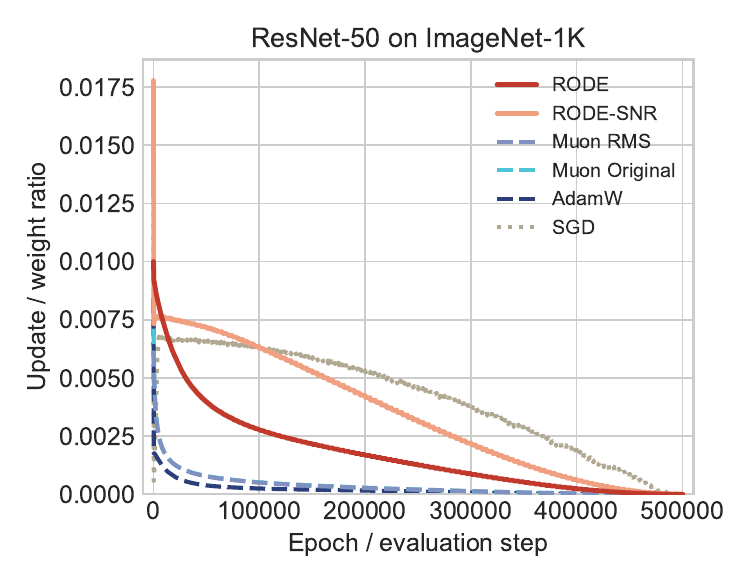}
	\end{minipage}
	\caption{Additional ResNet-50/ImageNet-1K curves from the exported per-step diagnostics: three-seed means with sample-standard-deviation bands for training accuracy, training loss, gradient norm, and update-to-weight ratio. Main ImageNet validation results are reported in Table~\ref{tab:main_results} and Figure~\ref{fig:vision_main}.}
	\label{app:fig:imagenet_supp}
\end{figure}

\clearpage

\section{Detailed Global Convergence Proof}
\label{app:global_convergence}

This appendix expands the convergence-neighborhood statement in Section~\ref{sec:theory}. We derive the geodesic displacement, split radial and tangent descent, bound the tangent tracking error, and combine the potential and kinetic inequalities with a Lyapunov function.

\subsection{Notation and algorithmic equations}

Let $f(W): \R^{m\times n}\rightarrow \R$ be the objective for a matrix parameter. For clarity assume $m\ge n$; if $m<n$, the implementation transposes the matrix before applying Newton--Schulz and transposes the result back. The Frobenius inner product is
\begin{equation}
    \langle A,B\rangle_\fro = \tr(A^\top B),
    \qquad
    \|A\|_\fro = \sqrt{\langle A,A\rangle_\fro}.
\end{equation}
At iteration $t$, RODE writes
\begin{equation}
    W_t = \rho_t U_t,
    \qquad
    \rho_t = \|W_t\|_\fro,
    \qquad
    U_t = \frac{W_t}{\rho_t},
    \qquad
    \|U_t\|_\fro = 1 .
\end{equation}
Let $\mathcal F_t$ denote the history immediately before sampling the stochastic gradient at step $t$, and write $\E_t[\cdot]=\E[\cdot\mid\mathcal F_t]$. Let $G_t = \nabla f(W_t)$ denote the full gradient and let $g_t$ denote a stochastic gradient satisfying $\E_t[g_t]=G_t$. We decompose both into radial and tangent components:
\begin{align}
    G_t^\rho &= \langle G_t,U_t\rangle_\fro,
    &
    G_t^\perp &= G_t - G_t^\rho U_t,\\
    g_t^\rho &= \langle g_t,U_t\rangle_\fro,
    &
    g_t^\perp &= g_t - g_t^\rho U_t.
\end{align}
Thus $W_t$, $U_t$, $\rho_t$, and $G_t$ are fixed under $\E_t$. Unsubscripted $\E$ denotes total expectation; below we pass from conditional one-step inequalities to total expectations using the tower property.
The radial update is
\begin{equation}
    \rho_{t+1} = \rho_t - \eta_\rho g_t^\rho .
\end{equation}
The implementation uses the guarded scalar update
\begin{equation}
    \rho_{t+1}=\max\{\rho_t-\eta_\rho g_t^\rho,\kappa_{\mathrm{floor}}\rho_t\}.
\end{equation}
The proof below analyzes the interior regime in which this floor is inactive. If the floor activates, the radial channel becomes a projected one-dimensional step and the radial descent inequality must be replaced by the corresponding projected-gradient inequality; we do not claim that branch in this appendix.
The implemented optimizer stores a raw buffer rather than only a tangent buffer:
\begin{equation}
    B_t=\beta\mathcal P_t(B_{t-1})+g_t,
    \qquad
    M_{\mathrm{raw},t}^\perp=\mathcal P_t(B_t).
\end{equation}
At the current step this gives the same tangent direction as projecting the newly formed buffer. Across steps, however, the radial part retained in $B_t$ can re-enter the next tangent space through $\mathcal P_{t+1}(B_t)$. Write
\begin{equation}
    B_t=M_{\mathrm{raw},t}^\perp+a_tU_t,
    \qquad
    a_t=\langle B_t,U_t\rangle_\fro .
\end{equation}
Then the next projected raw tangent buffer obeys
\begin{align}
    M_{\mathrm{raw},t+1}^\perp
    &=
    \mathcal P_{t+1}(B_{t+1})\nonumber\\
    &=
    \beta\mathcal P_{t+1}(M_{\mathrm{raw},t}^\perp)
    +
    g_{t+1}^\perp
    +
    r_{t+1},
\end{align}
with the cross-step radial leakage residual
\begin{equation}
    r_{t+1}=\beta a_t\mathcal P_{t+1}(U_t).
\end{equation}
The implementation stores the unnormalized momentum above. For the tracking argument we analyze its normalized counterpart
\begin{equation}
    M_t^\perp=(1-\beta)M_{\mathrm{raw},t}^\perp,
    \qquad
    \bar r_{t+1}=(1-\beta)r_{t+1},
\end{equation}
so that
\begin{equation}
    M_{t+1}^\perp=
    \beta\mathcal P_{t+1}(M_t^\perp)
    +(1-\beta)g_{t+1}^\perp
    +\bar r_{t+1}.
\end{equation}
This rescaling does not change the Newton--Schulz direction because the engine starts from $M/\|M\|_\fro$. The implemented raw buffer therefore induces an additional cross-step leakage residual; below we bound its squared contribution and absorb it into the curvature-order residual. From here on, $M_t^\perp$ denotes the normalized analysis momentum, and the appendix analyzes the nondegenerate, inactive-floor raw-buffer branch rather than the zero-norm safeguard or active-floor branches.
The projected Newton--Schulz engine starts from
\begin{equation}
    X_0 = \frac{M_t^\perp}{\|M_t^\perp\|_\fro}
\end{equation}
in the nondegenerate branch and, matching the default implementation, applies tangent reprojection after each quintic polynomial step:
\begin{align}
    Y_{k+1}
    &=
    3.4445X_k
    +
    X_k\left(-4.7750X_k^\top X_k+2.0315(X_k^\top X_k)^2\right),\\
    X_{k+1}
    &=
    Y_{k+1}-\langle Y_{k+1},U_t\rangle_\fro U_t,
    \qquad k=0,\ldots,K-1 .
\end{align}
After $K$ projected steps, define
\begin{equation}
    \widehat Q_t = X_K .
\end{equation}
Define its actual post-projection norm
\begin{equation}
    q_t=\|\widehat Q_t\|_\fro .
\end{equation}
Projection removes the component parallel to $U_t$, so $q_t$ is generally different from $\sqrt n$. The proof assumes that $\widehat Q_t$ retains a constant fraction of the polar-target alignment with $M_t^\perp$ and has nondegenerate projected norm, so that for some $c_{\mathrm{NS}}\in(0,1]$,
\begin{equation}
    \langle M_t^\perp,\widehat Q_t\rangle_\fro \ge c_{\mathrm{NS}}\|M_t^\perp\|_*,
    \qquad
    0<q_{\min}\le q_t\le q_{\max}<\infty .
\end{equation}
Define the unit tangent direction
\begin{equation}
    D_t^\perp = \frac{\widehat Q_t}{q_t},
    \qquad
    \langle D_t^\perp,U_t\rangle_\fro = 0 .
\end{equation}
Default RODE uses no scalar damping, i.e. $\gamma_t=1$ and $\gamma_{\min}=1$. The proof also covers any optional scalar damping satisfying $0<\gamma_{\min}\le\gamma_t\le1$, including the RODE-SNR ablation. The geodesic angle is
\begin{equation}
    \theta_t = \eta_{\mathrm{dir}}\gamma_t q_t .
\end{equation}
Finally,
\begin{align}
    U_{t+1} &= U_t\cos\theta_t - D_t^\perp\sin\theta_t,\\
    W_{t+1} &= \rho_{t+1}U_{t+1}.
\end{align}

\subsection{Assumptions}

\begin{assumption}[Smoothness]
\label{app:assump:smooth}
For all matrices $X,Y$,
\begin{equation}
    f(Y)\le f(X)+\langle \nabla f(X),Y-X\rangle_\fro
    +\frac{L}{2}\|Y-X\|_\fro^2 .
\end{equation}
\end{assumption}

\begin{assumption}[Unbiased noise with bounded variance]
\label{app:assump:noise}
The stochastic gradient is unbiased, and the radial and tangent noises are bounded:
\begin{align}
    \E_t[g_t^\rho-G_t^\rho] &= 0,
    &
    \E_t[(g_t^\rho-G_t^\rho)^2] &\le \sigma_\rho^2,\\
    \E_t[g_t^\perp-G_t^\perp] &= 0,
    &
    \E_t[\|g_t^\perp-G_t^\perp\|_\fro^2] &\le \sigma_\perp^2 .
\end{align}
\end{assumption}

\begin{assumption}[Bounded physical states]
\label{app:assump:bounded}
There exist constants such that
\begin{equation}
    0<\rho_{\min}\le \rho_t\le \rho_{\max},\quad
    |G_t^\rho|\le G_{\max},\quad
    \|M_t^\perp\|_\fro\le M_{\max},\quad
    |a_t|\le A_{\max}
    \quad\text{for some }M_{\max},A_{\max}>0.
\end{equation}
Here $a_t=\langle B_t,U_t\rangle_\fro$ is the radial component retained in the implemented raw buffer.
\end{assumption}

\begin{assumption}[Local tangent-gradient regularity]
\label{app:assump:tangent_lip}
Let $\mathcal P_{t+1}(X)=X-\langle X,U_{t+1}\rangle_\fro U_{t+1}$. For the small geodesic moves considered here,
\begin{equation}
    \E\left[\|\mathcal P_{t+1}(G_t^\perp)-G_{t+1}^\perp\|_\fro^2\right]
    \le
    L^2\E[\|\Delta W_t\|_\fro^2],
\end{equation}
where $\Delta W_t=W_{t+1}-W_t$.
\end{assumption}

\begin{assumption}[Aligned projected Newton--Schulz kernel]
\label{app:assump:polar}
After the finite projected Newton--Schulz iterations, the constant-step engine returns a tangent kernel satisfying, for some $c_{\mathrm{NS}}\in(0,1]$,
\begin{equation}
    \langle M_t^\perp,\widehat Q_t\rangle_\fro \ge c_{\mathrm{NS}}\|M_t^\perp\|_*,
    \qquad
    0<q_{\min}\le\|\widehat Q_t\|_\fro\le q_{\max}<\infty .
\end{equation}
This assumption isolates the optimizer dynamics from finite-iteration polynomial approximation errors, projection-induced alignment loss, and reduced-precision effects, and excludes degenerate steps where tangent projection nearly annihilates the conditioned kernel. The exact polar identity is recovered in the ideal case $c_{\mathrm{NS}}=1$.
\end{assumption}

\begin{assumption}[Bounded geodesic angle]
\label{app:assump:angle}
There exists $\theta_{\max}\in(0,\pi/4]$ such that
\begin{equation}
    \eta_{\mathrm{dir}}\le \frac{\theta_{\max}}{q_{\max}}.
\end{equation}
Since $\gamma_t\le 1$, this gives $\theta_t\le \theta_{\max}$ and ensures the global lower bound
\begin{equation}
    \sin(\theta_t)\ge k\theta_t,
    \qquad
    k=\frac{\sin(\theta_{\max})}{\theta_{\max}}
    \ge \frac{2\sqrt 2}{\pi}.
\end{equation}
\end{assumption}

\subsection{Raw-buffer leakage bound}

The geodesic update gives
\begin{equation}
    U_{t+1}=U_t\cos\theta_t-D_t^\perp\sin\theta_t,
    \qquad
    \langle U_t,D_t^\perp\rangle_\fro=0 .
\end{equation}
Hence $\langle U_t,U_{t+1}\rangle_\fro=\cos\theta_t$ and
\begin{equation}
    \|\mathcal P_{t+1}(U_t)\|_\fro^2
    =
    1-\langle U_t,U_{t+1}\rangle_\fro^2
    =
    \sin^2\theta_t .
\end{equation}
Using $\theta_t=\eta_{\mathrm{dir}}\gamma_tq_t$, $\gamma_t\le1$, and $q_t\le q_{\max}$,
\begin{equation}
    \|\mathcal P_{t+1}(U_t)\|_\fro
    \le
    \theta_t
    \le
    \eta_{\mathrm{dir}}q_{\max}.
\end{equation}
Therefore the normalized raw-buffer leakage residual satisfies
\begin{equation}
\label{app:eq:leakage_bound}
    \|\bar r_{t+1}\|_\fro^2
    \le
    (1-\beta)^2\beta^2A_{\max}^2q_{\max}^2\eta_{\mathrm{dir}}^2
    =:
    K_{\mathrm{leak}}\eta_{\mathrm{dir}}^2 .
\end{equation}

\subsection{Exact displacement expansion and second-order bound}

The physical displacement is
\begin{equation}
    \Delta W_t=W_{t+1}-W_t
    =
    \rho_{t+1}(U_t\cos\theta_t-D_t^\perp\sin\theta_t)-\rho_tU_t .
\end{equation}
Collecting the orthogonal basis terms $U_t$ and $D_t^\perp$ gives
\begin{equation}
    \Delta W_t
    =
    (\rho_{t+1}\cos\theta_t-\rho_t)U_t
    -
    \rho_{t+1}\sin\theta_t D_t^\perp .
\end{equation}
Because $\langle U_t,D_t^\perp\rangle_\fro=0$ and both directions have unit Frobenius norm,
\begin{align}
    \|\Delta W_t\|_\fro^2
    &=
    (\rho_{t+1}\cos\theta_t-\rho_t)^2
    +
    \rho_{t+1}^2\sin^2\theta_t\\
    &=
    \rho_{t+1}^2\cos^2\theta_t
    -2\rho_{t+1}\rho_t\cos\theta_t
    +\rho_t^2
    +\rho_{t+1}^2\sin^2\theta_t\\
    &=
    \rho_{t+1}^2+\rho_t^2-2\rho_{t+1}\rho_t\cos\theta_t\\
    &=
    (\rho_{t+1}-\rho_t)^2
    +
    2\rho_{t+1}\rho_t(1-\cos\theta_t).
\end{align}
Using $1-\cos\theta\le \theta^2/2$,
\begin{equation}
    \|\Delta W_t\|_\fro^2
    \le
    (\rho_{t+1}-\rho_t)^2+\rho_{t+1}\rho_t\theta_t^2 .
\end{equation}
Substituting $\rho_{t+1}-\rho_t=-\eta_\rho g_t^\rho$ and $\theta_t=\eta_{\mathrm{dir}}\gamma_t q_t$,
\begin{equation}
    \|\Delta W_t\|_\fro^2
    \le
    \eta_\rho^2(g_t^\rho)^2
    +
    \rho_{t+1}\rho_t\eta_{\mathrm{dir}}^2\gamma_t^2 q_t^2 .
\end{equation}
By Assumptions~\ref{app:assump:bounded}--\ref{app:assump:polar}, $\gamma_t\le 1$, and $q_t\le q_{\max}$,
\begin{equation}
    \|\Delta W_t\|_\fro^2
    \le
    \eta_\rho^2(g_t^\rho)^2
    +
    \rho_{\max}^2 q_{\max}^2\eta_{\mathrm{dir}}^2 .
\end{equation}
Taking conditional expectation and using
\begin{equation}
    \E_t[(g_t^\rho)^2]
    =
    (\E_t[g_t^\rho])^2+\operatorname{Var}_t(g_t^\rho)
    \le
    (G_t^\rho)^2+\sigma_\rho^2,
\end{equation}
we obtain the second-order smoothness contribution
\begin{equation}
\label{app:eq:second_order_bound}
    \frac{L}{2}\E_t[\|\Delta W_t\|_\fro^2]
    \le
    \frac{L}{2}\eta_\rho^2(G_t^\rho)^2
    +
    \frac{L}{2}\eta_\rho^2\sigma_\rho^2
    +
    \frac{L\rho_{\max}^2q_{\max}^2}{2}\eta_{\mathrm{dir}}^2 .
\end{equation}

\subsection{First-order descent: radial part}

The first-order term in smoothness is
\begin{equation}
    \langle G_t,\Delta W_t\rangle_\fro .
\end{equation}
Using $G_t=G_t^\rho U_t+G_t^\perp$ and the displacement expansion,
\begin{equation}
\label{app:eq:first_order_split}
    \langle G_t,\Delta W_t\rangle_\fro
    =
    G_t^\rho(\rho_{t+1}\cos\theta_t-\rho_t)
    -
    \rho_{t+1}\sin\theta_t\langle G_t^\perp,D_t^\perp\rangle_\fro .
\end{equation}
For the radial component,
\begin{align}
    I_\rho
    &=
    G_t^\rho(\rho_{t+1}\cos\theta_t-\rho_t)\\
    &=
    G_t^\rho(\rho_{t+1}-\rho_t)
    +
    G_t^\rho\rho_{t+1}(\cos\theta_t-1)\\
    &=
    -\eta_\rho G_t^\rho g_t^\rho
    -
    \rho_{t+1}G_t^\rho(1-\cos\theta_t).
\end{align}
Taking conditional expectation, the first term becomes
\begin{equation}
    \E_t[-\eta_\rho G_t^\rho g_t^\rho]
    =
    -\eta_\rho G_t^\rho\E_t[g_t^\rho]
    =
    -\eta_\rho(G_t^\rho)^2 .
\end{equation}
For the curvature contamination term,
\begin{align}
    -\rho_{t+1}G_t^\rho(1-\cos\theta_t)
    &\le
    |\rho_{t+1}G_t^\rho|(1-\cos\theta_t)\\
    &\le
    \rho_{\max}G_{\max}\frac{\theta_t^2}{2}\\
    &\le
    \frac{G_{\max}\rho_{\max}q_{\max}^2}{2}\eta_{\mathrm{dir}}^2 .
\end{align}
The last line uses $\theta_t=\eta_{\mathrm{dir}}\gamma_t q_t$, $\gamma_t\le1$, and the projected-kernel bound $q_t\le q_{\max}$.
Therefore
\begin{equation}
\label{app:eq:radial_descent}
    \E_t[I_\rho]
    \le
    -\eta_\rho(G_t^\rho)^2
    +
    \frac{G_{\max}\rho_{\max}q_{\max}^2}{2}\eta_{\mathrm{dir}}^2 .
\end{equation}

\subsection{First-order descent: tangent part}

The tangent component in \eqref{app:eq:first_order_split} is
\begin{equation}
    I_\perp =
    -\rho_{t+1}\sin\theta_t\langle G_t^\perp,D_t^\perp\rangle_\fro .
\end{equation}
Insert the identity $G_t^\perp=M_t^\perp-(M_t^\perp-G_t^\perp)$ and use $D_t^\perp=\widehat Q_t/q_t$:
\begin{align}
    I_\perp
    &=
    -\rho_{t+1}\frac{\sin\theta_t}{q_t}
    \langle M_t^\perp,\widehat Q_t\rangle_\fro
    +
    \rho_{t+1}\frac{\sin\theta_t}{q_t}
    \langle M_t^\perp-G_t^\perp,\widehat Q_t\rangle_\fro\\
    &\le
    -\rho_{t+1}\frac{\sin\theta_t}{q_t}c_{\mathrm{NS}}\|M_t^\perp\|_*
    +
    \rho_{t+1}\frac{\sin\theta_t}{q_t}
    \langle M_t^\perp-G_t^\perp,\widehat Q_t\rangle_\fro ,
\end{align}
where the second line uses Assumption~\ref{app:assump:polar}; the negative coefficient turns the alignment lower bound into an upper bound on $I_\perp$. By Assumption~\ref{app:assump:angle}, $\sin\theta_t\ge k\theta_t$ for the negative term. The remaining inner product can have either sign, so we first upper-bound it by its absolute value and then use $\sin\theta_t\le\theta_t$. Since $\theta_t=\eta_{\mathrm{dir}}\gamma_t q_t$, the actual post-projection norm cancels:
\begin{equation}
    I_\perp
    \le
    -k\rho_{t+1}\eta_{\mathrm{dir}}c_{\mathrm{NS}}\gamma_t\|M_t^\perp\|_*
    +
    \rho_{t+1}\eta_{\mathrm{dir}}\gamma_t
    \left|\langle M_t^\perp-G_t^\perp,\widehat Q_t\rangle_\fro\right| .
\end{equation}
We next lower bound the possibly damped and finite-NS-aligned nuclear norm. Since $\gamma_t\ge\gamma_{\min}$, $\|M\|_*\ge\|M\|_\fro$, and $\|M_t^\perp\|_\fro\le M_{\max}$,
\begin{equation}
    c_{\mathrm{NS}}\gamma_t\|M_t^\perp\|_*
    \ge
    c_{\mathrm{NS}}\gamma_{\min}\|M_t^\perp\|_\fro
    \ge
    \frac{c_{\mathrm{NS}}\gamma_{\min}}{M_{\max}}\|M_t^\perp\|_\fro^2 .
\end{equation}
Using $\rho_{t+1}\ge \rho_{\min}$ and defining
\begin{equation}
    \alpha_{\mathrm{dir}}=\frac{kc_{\mathrm{NS}}\rho_{\min}\gamma_{\min}}{M_{\max}},
\end{equation}
the main tangent term is bounded by
\begin{equation}
    -k\rho_{t+1}\eta_{\mathrm{dir}}c_{\mathrm{NS}}\gamma_t\|M_t^\perp\|_*
    \le
    -\alpha_{\mathrm{dir}}\eta_{\mathrm{dir}}\|M_t^\perp\|_\fro^2 .
\end{equation}
For the tracking-error term, Cauchy-Schwarz gives
\begin{align}
    \rho_{t+1}\eta_{\mathrm{dir}}\gamma_t
    \left|\langle M_t^\perp-G_t^\perp,\widehat Q_t\rangle_\fro\right|
    &\le
    \rho_{\max}\eta_{\mathrm{dir}}\|M_t^\perp-G_t^\perp\|_\fro\|\widehat Q_t\|_\fro\\
    &\le
    \rho_{\max}q_{\max}\eta_{\mathrm{dir}}\|M_t^\perp-G_t^\perp\|_\fro .
\end{align}
Applying Young's inequality $ab\le a^2/2+b^2/2$ with $a=\rho_{\max}q_{\max}\eta_{\mathrm{dir}}$ and $b=\|M_t^\perp-G_t^\perp\|_\fro$,
\begin{equation}
    \le
    \frac{1}{2}\rho_{\max}^2q_{\max}^2\eta_{\mathrm{dir}}^2
    +
    \frac{1}{2}\|M_t^\perp-G_t^\perp\|_\fro^2 .
\end{equation}
Taking conditional expectation,
\begin{equation}
\label{app:eq:tangent_descent}
    \E_t[I_\perp]
    \le
    -\alpha_{\mathrm{dir}}\eta_{\mathrm{dir}}\E_t[\|M_t^\perp\|_\fro^2]
    +
    \frac{1}{2}\E_t[\|M_t^\perp-G_t^\perp\|_\fro^2]
    +
    \frac{1}{2}\rho_{\max}^2q_{\max}^2\eta_{\mathrm{dir}}^2 .
\end{equation}

\subsection{Potential-energy inequality}

By smoothness,
\begin{equation}
    \E_t[f(W_{t+1})]-f(W_t)
    \le
    \E_t[\langle G_t,\Delta W_t\rangle_\fro]
    +
    \frac{L}{2}\E_t[\|\Delta W_t\|_\fro^2].
\end{equation}
Substituting \eqref{app:eq:second_order_bound}, \eqref{app:eq:radial_descent}, and \eqref{app:eq:tangent_descent}, then taking total expectation and using the tower property, we obtain
\begin{align}
    \E[f(W_{t+1})]-\E[f(W_t)]
    \le
    &-\left(\eta_\rho-\frac{L}{2}\eta_\rho^2\right)\E[(G_t^\rho)^2]
    -
    \alpha_{\mathrm{dir}}\eta_{\mathrm{dir}}\E[\|M_t^\perp\|_\fro^2]\nonumber\\
    &+
    \frac{1}{2}\E[\|M_t^\perp-G_t^\perp\|_\fro^2]
    +
    C_1\eta_{\mathrm{dir}}^2
    +
    \frac{L}{2}\sigma_\rho^2\eta_\rho^2,
\label{app:eq:potential}
\end{align}
where
\begin{equation}
    C_1=
    \frac{1}{2}\rho_{\max}^2q_{\max}^2
    +
    \frac{L}{2}\rho_{\max}^2q_{\max}^2
    +
    \frac{1}{2}G_{\max}\rho_{\max}q_{\max}^2 .
\end{equation}
If $\eta_\rho<2/L$, then $\eta_\rho-\frac{L}{2}\eta_\rho^2>0$.

\subsection{Tangent tracking-error dynamics}

Define the tangent tracking error
\begin{equation}
    E_t^\perp=\E[\|M_t^\perp-G_t^\perp\|_\fro^2].
\end{equation}
Here $M_t^\perp$ is the normalized analysis momentum defined in Appendix~\ref{app:global_convergence}; the implemented projected raw tangent buffer $M_{\mathrm{raw},t}^\perp$ differs from it only by the positive scalar $(1-\beta)^{-1}$ before Newton--Schulz normalization. For the actual raw-buffer recurrence, the normalized analysis momentum obeys
\begin{equation}
    M_{t+1}^\perp
    =
    \beta\mathcal P_{t+1}(M_t^\perp)
    +
    (1-\beta)g_{t+1}^\perp
    +
    \bar r_{t+1}.
\end{equation}
Subtract $G_{t+1}^\perp$ and add/subtract $\mathcal P_{t+1}(G_t^\perp)$:
\begin{align}
    M_{t+1}^\perp-G_{t+1}^\perp
    &=
    \beta\mathcal P_{t+1}(M_t^\perp-G_t^\perp)
    +
    \beta(\mathcal P_{t+1}(G_t^\perp)-G_{t+1}^\perp)\nonumber\\
    &\quad
    +
    (1-\beta)(g_{t+1}^\perp-G_{t+1}^\perp)
    +
    \bar r_{t+1}.
\end{align}
Let
\begin{align}
    x_t&=\beta\mathcal P_{t+1}(M_t^\perp-G_t^\perp),&
    y_t&=\beta(\mathcal P_{t+1}(G_t^\perp)-G_{t+1}^\perp),\\
    z_t&=(1-\beta)(g_{t+1}^\perp-G_{t+1}^\perp),&
    w_t&=\bar r_{t+1}.
\end{align}
Using
\begin{equation}
    \|x+y+z+w\|^2
    \le
    (1+\delta)\|x\|^2
    +
    3\left(1+\frac{1}{\delta}\right)(\|y\|^2+\|z\|^2+\|w\|^2),
\end{equation}
we obtain
\begin{align}
    E_{t+1}^\perp
    &\le
    \beta^2(1+\delta)
    \E[\|\mathcal P_{t+1}(M_t^\perp-G_t^\perp)\|_\fro^2]\nonumber\\
    &\quad+
    3\beta^2\left(1+\frac{1}{\delta}\right)
    \E[\|\mathcal P_{t+1}(G_t^\perp)-G_{t+1}^\perp\|_\fro^2]\nonumber\\
    &\quad+
    3(1-\beta)^2\left(1+\frac{1}{\delta}\right)\sigma_\perp^2
    \nonumber\\
    &\quad+
    3\left(1+\frac{1}{\delta}\right)\E[\|\bar r_{t+1}\|_\fro^2] .
\end{align}
Orthogonal projection is non-expansive:
\begin{equation}
    \|\mathcal P_{t+1}(M_t^\perp-G_t^\perp)\|_\fro^2
    \le
    \|M_t^\perp-G_t^\perp\|_\fro^2 .
\end{equation}
Choose $\delta=(1-\beta)/(2\beta)$. Then
\begin{equation}
    \beta^2(1+\delta)
    =
    \beta^2\left(1+\frac{1-\beta}{2\beta}\right)
    =
    \beta\frac{1+\beta}{2}
    <\beta,
\end{equation}
and
\begin{equation}
    3\left(1+\frac{1}{\delta}\right)
    =
    3\left(1+\frac{2\beta}{1-\beta}\right)
    =
    3\frac{1+\beta}{1-\beta}
    \le
    \frac{6}{1-\beta}.
\end{equation}
By Assumption~\ref{app:assump:tangent_lip}, the displacement bound, and \eqref{app:eq:leakage_bound},
\begin{align}
    E_{t+1}^\perp
    \le
    \beta E_t^\perp
    +
    \frac{6\beta^2L^2}{1-\beta}
    \E[\eta_\rho^2(g_t^\rho)^2+\rho_{\max}^2q_{\max}^2\eta_{\mathrm{dir}}^2]
    +
    6(1-\beta)\sigma_\perp^2
    +
    \frac{6K_{\mathrm{leak}}}{1-\beta}\eta_{\mathrm{dir}}^2 .
\end{align}
Using $\E[(g_t^\rho)^2]\le \E[(G_t^\rho)^2]+\sigma_\rho^2$,
\begin{align}
    E_{t+1}^\perp-E_t^\perp
    \le
    &-(1-\beta)E_t^\perp
    +
    \frac{6\beta^2L^2}{1-\beta}\eta_\rho^2\E[(G_t^\rho)^2]
    +
    C_2\eta_{\mathrm{dir}}^2\nonumber\\
    &+
    6(1-\beta)\sigma_\perp^2
    +
    \frac{6\beta^2L^2}{1-\beta}\sigma_\rho^2\eta_\rho^2 ,
\label{app:eq:kinetic}
\end{align}
where
\begin{equation}
    C_2=\frac{6\beta^2L^2\rho_{\max}^2q_{\max}^2}{1-\beta}
    +
    \frac{6K_{\mathrm{leak}}}{1-\beta}.
\end{equation}

\subsection{Lyapunov synthesis and convergence neighborhood}

Define the Lyapunov function
\begin{equation}
    \Phi_t=\E[f(W_t)]+c_\perp E_t^\perp .
\end{equation}
Choose
\begin{equation}
    c_\perp=\frac{1}{1-\beta},
\end{equation}
so that the $E_t^\perp$ coefficient after combining \eqref{app:eq:potential} and $c_\perp$ times \eqref{app:eq:kinetic} becomes
\begin{equation}
    \frac{1}{2}-c_\perp(1-\beta)=-\frac{1}{2}.
\end{equation}
The radial coefficient becomes
\begin{align}
    -\left(\eta_\rho-\frac{L}{2}\eta_\rho^2\right)
    +
    c_\perp\frac{6\beta^2L^2}{1-\beta}\eta_\rho^2
    &=
    -\eta_\rho
    +
    \frac{L}{2}\eta_\rho^2
    +
    \frac{6\beta^2L^2}{(1-\beta)^2}\eta_\rho^2\\
    &=
    -\eta_\rho
    \left(
    1-\frac{L}{2}\eta_\rho
    -
    \frac{6\beta^2L^2}{(1-\beta)^2}\eta_\rho
    \right).
\end{align}
For sufficiently small $\eta_\rho$, the parenthesized term is positive. Denote the resulting positive constant by $\widehat \alpha_\rho>0$. Combining all constant terms, including the raw-buffer leakage contribution $c_\perp C_2\eta_{\mathrm{dir}}^2$, into $\mathcal K_{\mathrm{curv}}$, $\mathcal K_n^\perp$, and $\mathcal K_n^\rho$ gives
\begin{align}
    \Phi_{t+1}-\Phi_t
    \le
    &-\widehat\alpha_\rho\eta_\rho\E[(G_t^\rho)^2]
    -
    \alpha_{\mathrm{dir}}\eta_{\mathrm{dir}}\E[\|M_t^\perp\|_\fro^2]
    -
    \frac{1}{2}E_t^\perp\nonumber\\
    &+
    \mathcal K_{\mathrm{curv}}\eta_{\mathrm{dir}}^2
    +
    \mathcal K_n^\perp\sigma_\perp^2
    +
    \mathcal K_n^\rho\sigma_\rho^2\eta_\rho^2 .
\label{app:eq:lyapunov_step}
\end{align}
Summing from $t=1$ to $T$,
\begin{align}
    \Phi_{T+1}-\Phi_1
    \le
    \sum_{t=1}^T
    \left[
    -\widehat\alpha_\rho\eta_\rho\E[(G_t^\rho)^2]
    -
    \alpha_{\mathrm{dir}}\eta_{\mathrm{dir}}\E[\|M_t^\perp\|_\fro^2]
    -
    \frac{1}{2}E_t^\perp
    +
    \mathcal K_{\mathrm{curv}}\eta_{\mathrm{dir}}^2
    +
    \mathcal K_n^\perp\sigma_\perp^2
    +
    \mathcal K_n^\rho\sigma_\rho^2\eta_\rho^2
    \right].
\end{align}
Since $f$ is lower bounded and $E_t^\perp\ge0$, there is a finite $\Phi_{\inf}$ with $\Phi_t\ge\Phi_{\inf}$. Moving the nonnegative descent terms to the left and using $\Phi_{T+1}\ge\Phi_{\inf}$ gives the finite-time bound
\begin{align}
    \frac{1}{T}\sum_{t=1}^T
    \left[
    \widehat\alpha_\rho\eta_\rho\E[(G_t^\rho)^2]
    +
    \alpha_{\mathrm{dir}}\eta_{\mathrm{dir}}\E[\|M_t^\perp\|_\fro^2]
    +
    \frac{1}{2}E_t^\perp
    \right]
    \le
    \mathcal K_{\mathrm{curv}}\eta_{\mathrm{dir}}^2
    +
    \mathcal K_n^\perp\sigma_\perp^2
    +
    \mathcal K_n^\rho\sigma_\rho^2\eta_\rho^2
    +
    \frac{\Phi_1-\Phi_{\inf}}{T}.
\label{app:eq:average_bound}
\end{align}
Taking $\limsup_{T\to\infty}$ removes the last term.

\begin{theorem}[Interior-regime decoupled convergence neighborhood]
\label{app:thm:neighborhood}
Under Assumptions~\ref{app:assump:smooth}--\ref{app:assump:angle}, sufficiently small $\eta_\rho$, inactive radial floors, and the nondegenerate raw-buffer recurrence above, the time-averaged radial gradient, normalized tangent momentum, and tangent tracking error admit the following asymptotic upper bounds:
\begin{align}
    \limsup_{T\rightarrow\infty}\frac{1}{T}\sum_{t=1}^T\E[(G_t^\rho)^2]
    &\le
    \frac{\mathcal K_{\mathrm{curv}}}{\widehat\alpha_\rho}\frac{\eta_{\mathrm{dir}}^2}{\eta_\rho}
    +
    \frac{\mathcal K_n^\perp}{\widehat\alpha_\rho}\frac{\sigma_\perp^2}{\eta_\rho}
    +
    \frac{\mathcal K_n^\rho}{\widehat\alpha_\rho}\eta_\rho\sigma_\rho^2
    =:\epsilon_\rho,\\
    \limsup_{T\rightarrow\infty}\frac{1}{T}\sum_{t=1}^T\E[\|M_t^\perp\|_\fro^2]
    &\le
    \frac{\mathcal K_{\mathrm{curv}}}{\alpha_{\mathrm{dir}}}\eta_{\mathrm{dir}}
    +
    \frac{\mathcal K_n^\perp}{\alpha_{\mathrm{dir}}}\frac{\sigma_\perp^2}{\eta_{\mathrm{dir}}}
    +
    \frac{\mathcal K_n^\rho}{\alpha_{\mathrm{dir}}}\frac{\eta_\rho^2}{\eta_{\mathrm{dir}}}\sigma_\rho^2,\\
    \limsup_{T\rightarrow\infty}\frac{1}{T}\sum_{t=1}^T E_t^\perp
    &\le
    2\mathcal K_{\mathrm{curv}}\eta_{\mathrm{dir}}^2
    +
    2\mathcal K_n^\perp\sigma_\perp^2
    +
    2\mathcal K_n^\rho\sigma_\rho^2\eta_\rho^2 .
\end{align}
Consequently,
\begin{equation}
    \limsup_{T\rightarrow\infty}\frac{1}{T}\sum_{t=1}^T\E[\|G_t^\perp\|_\fro^2]
    \le
    \mathcal O(\eta_{\mathrm{dir}})
    +
    \mathcal O\left(\frac{\sigma_\perp^2}{\eta_{\mathrm{dir}}}\right)
    +
    \mathcal O\left(\frac{\eta_\rho^2}{\eta_{\mathrm{dir}}}\sigma_\rho^2\right)
    =:\epsilon_\perp .
\end{equation}
\end{theorem}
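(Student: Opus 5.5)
The plan is to read the three asymptotic bounds off the averaged Lyapunov inequality \eqref{app:eq:average_bound}, and then obtain the tangent-gradient bound by a triangle inequality that links $G_t^\perp$ to $M_t^\perp$ and the tracking error.

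\textbf{Step 1: the three asymptotic bounds.} The left side of \eqref{app:eq:average_bound} is a sum of three nonnegative averaged terms: $\widehat\alpha_\rho\eta_\rho\E[(G_t^\rho)^2]$, $\alpha_{\mathrm{dir}}\eta_{\mathrm{dir}}\E[\|M_t^\perp\|_\fro^2]$ and $\tfrac12E_t^\perp$. Each of them is therefore individually bounded by the right side. To keep the right side finite, I first fix $\eta_\rho$ small enough that the parenthesized radial coefficient in the Lyapunov synthesis is positive; this defines $\widehat\alpha_\rho$. I also use that $\alpha_{\mathrm{dir}}>0$ under Assumptions~\ref{app:assump:bounded}, \ref{app:assump:polar} and \ref{app:assump:angle}. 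The finiteness of $\Phi_{\inf}$ follows from $f$ being bounded below and $E_t^\perp\ge 0$. Taking $\limsup_{T\to\infty}$ removes $(\Phi_1-\Phi_{\inf})/T$. I then divide by $\widehat\alpha_\rho\eta_\rho$, by $\alpha_{\mathrm{dir}}\eta_{\mathrm{dir}}$, and by $1/2$, respectively, which yields exactly the three displayed bounds.

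\textbf{Step 2: the tangent-gradient bound.} I write $G_t^\perp=M_t^\perp-(M_t^\perp-G_t^\perp)$ and apply $\|a+b\|^2\le 2\|a\|^2+2\|b\|^2$. This gives
\[
\E\|G_t^\perp\|_\fro^2\le 2\E\|M_t^\perp\|_\fro^2+2E_t^\perp .
\]
Averaging over $t$, taking $\limsup$, and inserting the second and third bounds gives terms of order $\eta_{\mathrm{dir}}$, $\sigma_\perp^2/\eta_{\mathrm{dir}}$ and $\eta_\rho^2\sigma_\rho^2/\eta_{\mathrm{dir}}$ from the momentum bound. The tracking-error bound contributes $\eta_{\mathrm{dir}}^2$, $\sigma_\perp^2$ and $\eta_\rho^2\sigma_\rho^2$. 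In the small-step regime $\eta_{\mathrm{dir}}\le 1$, these are dominated by the momentum-bound terms:
\begin{itemize}
\item $\eta_{\mathrm{dir}}^2\le\eta_{\mathrm{dir}}$;
\item $\sigma_\perp^2\le\sigma_\perp^2/\eta_{\mathrm{dir}}$;
\item $\eta_\rho^2\sigma_\rho^2\le\eta_\rho^2\sigma_\rho^2/\eta_{\mathrm{dir}}$.
\end{itemize}
Hence they fold into the stated $\mathcal O(\cdot)$ terms. The constants hidden in the $\mathcal O(\cdot)$ are $2\mathcal K/\alpha_{\mathrm{dir}}+2\mathcal K$ for the respective $\mathcal K$'s.

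\textbf{Main obstacle.} The delicate point is not these final manipulations but the legitimacy of the averaged inequality itself. In particular, the constants $\mathcal K_{\mathrm{curv}}$, $\mathcal K_n^\perp$ and $\mathcal K_n^\rho$ must be genuinely independent of $t$ and $T$. This requires checking that every residual is controlled by constants from Assumptions~\ref{app:assump:bounded}--\ref{app:assump:angle} uniformly along the trajectory:
\begin{itemize}
\item the leakage residual $c_\perp C_2\eta_{\mathrm{dir}}^2$;
\item the curvature terms $C_1$;
\item the noise terms $6c_\perp(1-\beta)\sigma_\perp^2$.
\end{itemize}
It also requires that the interior-regime hypotheses (inactive floor, $q_t\ge q_{\min}$) hold at every step, so that the recursions \eqref{app:eq:potential} and \eqref{app:eq:kinetic} apply for all $t$.

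A second subtlety is the $\sigma_\perp^2/\eta_\rho$ term in $\epsilon_\rho$. It arises because the kinetic noise $\mathcal K_n^\perp\sigma_\perp^2$ is not multiplied by a step size, so dividing by $\widehat\alpha_\rho\eta_\rho$ produces this non-vanishing neighborhood. I would state this explicitly rather than attempt to sharpen it.
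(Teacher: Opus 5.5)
Your proposal is correct and is essentially the paper's own proof: divide the averaged Lyapunov inequality \eqref{app:eq:average_bound} by $\widehat\alpha_\rho\eta_\rho$, $\alpha_{\mathrm{dir}}\eta_{\mathrm{dir}}$, and $1/2$; then bound $\|G_t^\perp\|_\fro^2\le 2\|M_t^\perp\|_\fro^2+2\|M_t^\perp-G_t^\perp\|_\fro^2$ and substitute the momentum and tracking-error bounds. Your explicit absorption of the tracking-error terms into the $\mathcal O(\cdot)$ terms, which the paper leaves implicit, is sound, and you do not need the extra restriction $\eta_{\mathrm{dir}}\le 1$, because Assumption~\ref{app:assump:angle} already gives $\eta_{\mathrm{dir}}\le\theta_{\max}/q_{\max}$, which only changes the hidden constants.
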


\begin{proof}
The first three bounds are obtained by dividing \eqref{app:eq:average_bound} by $\widehat\alpha_\rho\eta_\rho$, $\alpha_{\mathrm{dir}}\eta_{\mathrm{dir}}$, and $1/2$, respectively. For the final true tangent-gradient bound, use
\begin{equation}
    \|G_t^\perp\|_\fro^2
    =
    \|M_t^\perp-(M_t^\perp-G_t^\perp)\|_\fro^2
    \le
    2\|M_t^\perp\|_\fro^2
    +
    2\|M_t^\perp-G_t^\perp\|_\fro^2,
\end{equation}
then take expectation and substitute the tangent-momentum and tracking-error bounds above.
\end{proof}

\section{PL Recursion for an Aligned First-Order Model}
\label{app:pl_convergence}

This appendix analyzes the first-order model $W_{t+1}=W_t-\eta H_t$ for RODE's effective displacement. It is separate from the exact raw-buffer recurrence proved in Appendix~\ref{app:global_convergence}: finite geodesic curvature, unequal coordinate stepsizes, the radial floor, and finite projected Newton--Schulz error must be absorbed into $H_t$ or controlled as residual terms before this model applies. Within this scope, boundedness, effective alignment, smoothness, and the PL condition yield linear convergence to a stepsize-controlled neighborhood.

\subsection{Additional assumptions for PL analysis}

\begin{assumption}[PL smooth setting]
\label{app:assump:pl}
The objective is $L$-smooth and satisfies the Polyak-Lojasiewicz condition
\begin{equation}
    \frac{1}{2}\|\nabla f(W_t)\|_\fro^2
    \ge
    \mu(f(W_t)-f^*).
\end{equation}
\end{assumption}

\begin{assumption}[Bounded effective direction]
\label{app:assump:direction_bound}
The full effective update direction $H_t$ satisfies $\|H_t\|_\fro\le G$.
\end{assumption}

\begin{assumption}[Local reprojected tangent-gradient regularity]
\label{app:assump:pl_tangent_lip}
For the momentum history used in the PL argument, let $G_i^{\mathrm{tan}}$ denote the tangent gradient from step $i$ reprojected into the current tangent space at $U_t$, and let $G_{\mathrm{tan}}=G_t^\perp$. There is a constant $L_{\mathrm{tan}}$ such that
\begin{equation}
    \|G_i^{\mathrm{tan}}-G_{\mathrm{tan}}\|_\fro
    \le
    L_{\mathrm{tan}}\|W_i-W_t\|_\fro .
\end{equation}
This local condition accounts for both ordinary gradient variation and projection drift from moving tangent spaces; it is not implied by Euclidean $L$-smoothness alone.
\end{assumption}

\begin{assumption}[Bounded tangent gradient and local polar stability]
\label{app:assump:polar_stability}
Along the trajectory considered in the PL argument, $\|G_{\mathrm{tan}}\|_\fro\le G_{\max}$. Moreover, the projected polar map is locally Lipschitz around the tangent gradients: for the momentum tracking errors considered below,
\begin{equation}
    \|\mathcal P_t(\Polar(G_{\mathrm{tan}}+E))-\mathcal P_t(\Polar(G_{\mathrm{tan}}))\|_\fro
    \le
    C_{\mathrm{pol}}\|E\|_\fro .
\end{equation}
This assumption excludes degenerate points where the polar factor is unstable. Since $\mathcal P_t$ is non-expansive, the displayed condition follows from the corresponding local Lipschitz bound for the unprojected polar map; the global-neighborhood proof above instead isolates polar accuracy through Assumption~\ref{app:assump:polar}.
\end{assumption}

\begin{assumption}[Strong directional alignment]
\label{app:assump:alignment}
There exists $\alpha>0$ such that
\begin{equation}
    \langle G_t,H_t\rangle_\fro
    \ge
    \alpha\|G_t\|_\fro^2.
\end{equation}
\end{assumption}

The next two lemmas give sufficient alignment conditions for a projected-polar first-order model. They do not establish the condition for every finite-step implementation branch; Theorem~\ref{app:thm:pl} assumes effective alignment directly.

\subsection{Radial alignment}

\begin{lemma}[Radial alignment]
\label{app:lem:radial_alignment}
The radial update direction has positive alignment with the radial gradient.
\end{lemma}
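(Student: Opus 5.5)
The claim is essentially a one-line computation, so I will fix what ``radial update direction'' means and then compute.

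In the first-order model $W_{t+1}=W_t-\eta H_t$, the radial channel contributes the component of $H_t$ along $U_t$. From \eqref{eq:radial_update} in the interior regime (inactive floor), $\rho_{t+1}-\rho_t=-\eta_\rho g_t^\rho$. The radial part of the effective displacement is therefore $-\eta_\rho g_t^\rho U_t$, and the radial direction is $H_t^\rho=(\eta_\rho/\eta)\,g_t^\rho U_t$. In the deterministic or expected form we replace $g_t^\rho$ by $G_t^\rho$, using $\E_t[g_t^\rho]=G_t^\rho$ from Assumption~\ref{app:assump:noise}.

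Next I compute the alignment against the radial part of the gradient, $G_t^\rho U_t$. Since $\|U_t\|_\fro=1$,
\begin{equation}
    \langle G_t^\rho U_t,\,H_t^\rho\rangle_\fro=\frac{\eta_\rho}{\eta}(G_t^\rho)^2 .
\end{equation}
This gives alignment constant $\alpha_\rho=\eta_\rho/\eta>0$, with equality rather than merely an inequality. In the stochastic version, taking $\E_t$ of $\langle G_t^\rho U_t,\,(\eta_\rho/\eta)\,g_t^\rho U_t\rangle_\fro$ gives the same identity, because $U_t$ and $G_t^\rho$ are $\mathcal F_t$-measurable.

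There are two remaining points to check. First, the cross terms with the tangent channel vanish. Because $\langle U_t,D_t^\perp\rangle_\fro=0$, the radial and tangent parts of both $G_t$ and $H_t$ are orthogonal, so this radial alignment adds directly to the tangent alignment proved in the next lemma. Second, the geodesic factor $\cos\theta_t$ multiplying $\rho_{t+1}$ must be handled. In the exact displacement it contributes an extra radial term $-\rho_{t+1}(1-\cos\theta_t)U_t$. This term is $\mathcal O(\eta_{\mathrm{dir}}^2)$, and the introduction to this appendix says such curvature terms are absorbed into $H_t$ or treated as residuals. I will state the lemma for the first-order radial direction and note that this residual is excluded.

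The only real obstacle is making the statement precise, namely what is aligned with what. Once the radial direction is identified as $(\eta_\rho/\eta)g_t^\rho U_t$, the proof is immediate.
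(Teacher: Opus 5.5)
Your proof is correct and matches the paper's: the paper takes the ideal radial direction $H_{\mathrm{rad}}=G_{\mathrm{rad}}=\langle G_t,U_t\rangle_\fro U_t$ and computes $\langle G_{\mathrm{rad}},H_{\mathrm{rad}}\rangle_\fro=\|G_{\mathrm{rad}}\|_\fro^2$, i.e.\ $K_{\mathrm{rad}}=1$. The only difference is bookkeeping: the paper leaves the scalar step size to the outer PL recursion, while you fold $\eta_\rho/\eta$ into $H_t$ and obtain constant $\eta_\rho/\eta$, which reflects the unequal step sizes more faithfully. That ratio then enters $\alpha=\min\{K_{\mathrm{rad}},K_{\mathrm{tan}}\}$, and only your deterministic $G_t^\rho$ form gives the pointwise alignment the PL theorem uses; the stochastic $g_t^\rho$ form gives it only in expectation.
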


\begin{proof}
Write $W_t=\rho_tU_t$. The radial gradient vector is
\begin{equation}
    G_{\mathrm{rad}}=\langle G_t,U_t\rangle_\fro U_t .
\end{equation}
The ideal radial descent direction is this gradient component; the outer PL recursion applies the scalar stepsize separately:
\begin{equation}
    H_{\mathrm{rad}}=G_{\mathrm{rad}} .
\end{equation}
Therefore
\begin{align}
    \langle G_{\mathrm{rad}},H_{\mathrm{rad}}\rangle_\fro
    &=
    \|G_{\mathrm{rad}}\|_\fro^2 .
\end{align}
Let $K_{\mathrm{rad}}=1$. Then
\begin{equation}
    \langle G_{\mathrm{rad}},H_{\mathrm{rad}}\rangle_\fro
    \ge
    K_{\mathrm{rad}}\|G_{\mathrm{rad}}\|_\fro^2 .
\end{equation}
\end{proof}

\subsection{Tangent alignment}

\begin{lemma}[Tangent alignment in the projected-polar model]
\label{app:lem:tangent_alignment}
Under Assumptions~\ref{app:assump:direction_bound}--\ref{app:assump:polar_stability} and sufficiently small learning rate, the projected-polar tangent step in the first-order model remains positively aligned with the true tangent gradient. The same result holds for scalar damping bounded below by a positive constant.
\end{lemma}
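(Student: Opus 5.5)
The plan is to treat the projected-polar tangent step as $H_{\mathrm{tan}}=\gamma_t\,\mathcal P_t(\Polar(M_t^\perp))$ and compare it with the ideal step $\mathcal P_t(\Polar(G_{\mathrm{tan}}))$ built from the current tangent gradient $G_{\mathrm{tan}}=G_t^\perp$. I would establish the alignment $\langle G_{\mathrm{tan}},H_{\mathrm{tan}}\rangle_\fro\ge K_{\mathrm{tan}}\|G_{\mathrm{tan}}\|_\fro^2$ for some $K_{\mathrm{tan}}>0$. The argument splits into an ideal alignment term and a perturbation term:
\[
\langle G_{\mathrm{tan}},H_{\mathrm{tan}}\rangle_\fro
=\gamma_t\langle G_{\mathrm{tan}},\mathcal P_t(\Polar(G_{\mathrm{tan}}))\rangle_\fro
+\gamma_t\langle G_{\mathrm{tan}},\mathcal P_t(\Polar(M_t^\perp))-\mathcal P_t(\Polar(G_{\mathrm{tan}}))\rangle_\fro .
\]

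\textbf{Ideal term.} Since $G_{\mathrm{tan}}$ already lies in the tangent space at $U_t$, the projection is self-adjoint and fixes $G_{\mathrm{tan}}$. Hence $\langle G_{\mathrm{tan}},\mathcal P_t(\Polar(G_{\mathrm{tan}}))\rangle_\fro=\langle G_{\mathrm{tan}},\Polar(G_{\mathrm{tan}})\rangle_\fro=\|G_{\mathrm{tan}}\|_*$. Using $\|X\|_*\ge\|X\|_\fro$ and the bound $\|G_{\mathrm{tan}}\|_\fro\le G_{\max}$ from Assumption~\ref{app:assump:polar_stability}, this gives
\[
\|G_{\mathrm{tan}}\|_*\ge \|G_{\mathrm{tan}}\|_\fro^2/G_{\max}.
\]
With $\gamma_t\ge\gamma_{\min}$, the ideal contribution is at least $(\gamma_{\min}/G_{\max})\|G_{\mathrm{tan}}\|_\fro^2$.

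\textbf{Perturbation term.} Write $M_t^\perp=G_{\mathrm{tan}}+E_t$, where $E_t$ is the momentum tracking error. For exact-gradient momentum this is $E_t=\sum_i w_i(G_i^{\mathrm{tan}}-G_{\mathrm{tan}})$ with convex weights $w_i\propto\beta^{t-i}$. Assumption~\ref{app:assump:pl_tangent_lip} then gives $\|E_t\|_\fro\le L_{\mathrm{tan}}\sum_i w_i\|W_i-W_t\|_\fro$. Under the first-order model, $\|W_i-W_t\|_\fro\le\eta G(t-i)$ by Assumption~\ref{app:assump:direction_bound}. Summing the geometric series gives
\[
\|E_t\|_\fro\le L_{\mathrm{tan}}G\eta\,\beta/(1-\beta).
\]
Assumption~\ref{app:assump:polar_stability} then bounds the perturbation term in absolute value by $\gamma_t C_{\mathrm{pol}}\|G_{\mathrm{tan}}\|_\fro\|E_t\|_\fro=O(\eta)\|G_{\mathrm{tan}}\|_\fro$.

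\textbf{Main obstacle.} The perturbation bound is linear in $\|G_{\mathrm{tan}}\|_\fro$, while the ideal term is quadratic, so a small learning rate alone does not close the argument near stationary points. I would first try a relative bound: along the trajectory, assume the drift satisfies $\|W_i-W_t\|_\fro\lesssim\eta(t-i)\|G_{\mathrm{tan}}\|_\fro$, or equivalently read the locality in Assumption~\ref{app:assump:polar_stability} as $\|E_t\|_\fro\le c\,\eta\|G_{\mathrm{tan}}\|_\fro$. The perturbation term then becomes $O(\eta)\|G_{\mathrm{tan}}\|_\fro^2$.

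Choosing $\eta$ small enough that
\[
\gamma_tC_{\mathrm{pol}}c\,\eta\le\gamma_{\min}/(2G_{\max})
\]
yields $K_{\mathrm{tan}}=\gamma_{\min}/(2G_{\max})>0$. If the relative bound is unavailable, the fallback is weaker. One states alignment with an additive $O(\eta)$ slack, or restricts to the regime $\|G_{\mathrm{tan}}\|_\fro\ge c'\eta$, which the PL recursion can still absorb into its $\mathcal O(\eta)$ neighborhood.

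Finally, scalar damping enters only through the factor $\gamma_t\in[\gamma_{\min},1]$ in both terms. The same constants therefore carry over with $\gamma_{\min}$ in place of $1$, which proves the damped case.
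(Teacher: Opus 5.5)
There is a gap, though it is self-inflicted and easy to close. Your main route rests on the relative drift bound $\|E_t\|_\fro\le c\,\eta\|G_{\mathrm{tan}}\|_\fro$. That bound does not follow from Assumptions~\ref{app:assump:direction_bound}--\ref{app:assump:polar_stability}, because the drift estimate $\|W_i-W_t\|_\fro\le(t-i)\eta G$ is absolute, not relative. Your fallbacks (an additive $O(\eta)$ slack, or restricting to $\|G_{\mathrm{tan}}\|_\fro\ge c'\eta$) prove something weaker than $\langle G_{\mathrm{tan}},H_{\mathrm{tan}}\rangle_\fro\ge K_{\mathrm{tan}}\|G_{\mathrm{tan}}\|_\fro^2$. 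So, as written, the proposal does not establish the lemma under its stated hypotheses.

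The linear-versus-quadratic mismatch you flag does not exist. It appears only because you lower-bounded the ideal term by $\|G_{\mathrm{tan}}\|_\fro^2/G_{\max}$ before comparing it with the perturbation. Stop one step earlier. The bound $\gamma_t\|G_{\mathrm{tan}}\|_*\ge\gamma_t\|G_{\mathrm{tan}}\|_\fro$ is linear in $\|G_{\mathrm{tan}}\|_\fro$, exactly like your perturbation bound $\gamma_tC_{\mathrm{pol}}C_1\eta\|G_{\mathrm{tan}}\|_\fro$ with $C_1=L_{\mathrm{tan}}G\beta/(1-\beta)$. Combining them gives
\[
\langle G_{\mathrm{tan}},H_{\mathrm{tan}}\rangle_\fro\ge\gamma_t(1-C_1C_{\mathrm{pol}}\eta)\|G_{\mathrm{tan}}\|_\fro\ge\gamma_{\min}(1-C_1C_{\mathrm{pol}}\eta)\|G_{\mathrm{tan}}\|_\fro
\]
for $\eta<1/(C_1C_{\mathrm{pol}})$. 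This needs no relative hypothesis and no restriction away from stationary points. Only then apply $\|G_{\mathrm{tan}}\|_\fro\ge\|G_{\mathrm{tan}}\|_\fro^2/G_{\max}$ to get $K_{\mathrm{tan}}=\gamma_{\min}(1-C_1C_{\mathrm{pol}}\eta)/G_{\max}>0$.

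With this reordering, the rest of your argument coincides with the paper's proof: the convex momentum weights, the $\beta/(1-\beta)$ drift sum, the self-adjointness step for the ideal term, and the $\gamma_t\in[\gamma_{\min},1]$ damping. The real difficulty near stationary points is different from the one you named. The polar map is scale-invariant, so its local Lipschitz constant grows as $G_{\mathrm{tan}}$ shrinks. Assumption~\ref{app:assump:polar_stability} simply excludes this by positing a uniform $C_{\mathrm{pol}}$.
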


\begin{proof}
Let $G_{\mathrm{tan}}$ denote the tangent component of $G_t$ at the current direction $U_t$. The tangent update is based on momentum. To show that momentum preserves alignment, use the projection-based momentum form of the EMA: $G_i^{\mathrm{tan}}$ denotes a past tangent gradient reprojected into the current tangent space, and
\begin{equation}
    M_t=\sum_{i=0}^{t} w_iG_i^{\mathrm{tan}},
\end{equation}
where, with initialization at step $0$,
\begin{equation}
    w_i=
    \begin{cases}
        (1-\beta)\beta^{t-i}, & i\ge 1,\\
        \beta^t, & i=0 .
    \end{cases}
\end{equation}
The weights sum to one:
\begin{align}
    \sum_{i=0}^{t}w_i
    &=
    \left(\sum_{i=1}^{t}(1-\beta)\beta^{t-i}\right)+\beta^t\\
    &=
    (1-\beta^t)+\beta^t
    =
    1.
\end{align}
Define the momentum tracking error $E_t=M_t-G_{\mathrm{tan}}$. Because $\sum_iw_i=1$,
\begin{align}
    E_t
    &=
    \sum_{i=0}^{t}w_iG_i^{\mathrm{tan}}-\sum_{i=0}^{t}w_iG_{\mathrm{tan}}\\
    &=
    \sum_{i=0}^{t}w_i(G_i^{\mathrm{tan}}-G_{\mathrm{tan}}).
\end{align}
By the triangle inequality,
\begin{equation}
    \|E_t\|_\fro
    \le
    \sum_{i=0}^{t}w_i\|G_i^{\mathrm{tan}}-G_{\mathrm{tan}}\|_\fro .
\end{equation}
Using Assumption~\ref{app:assump:pl_tangent_lip},
\begin{equation}
    \|G_i^{\mathrm{tan}}-G_{\mathrm{tan}}\|_\fro
    \le
    L_{\mathrm{tan}}\|W_i-W_t\|_\fro .
\end{equation}
If each update has norm at most $G$ and learning rate $\eta$, then
\begin{align}
    \|W_i-W_t\|_\fro
    &=
    \left\|\sum_{k=i+1}^{t}\eta H_k\right\|_\fro\\
    &\le
    \sum_{k=i+1}^{t}\eta\|H_k\|_\fro\\
    &\le
    (t-i)\eta G .
\end{align}
Thus
\begin{equation}
    \|E_t\|_\fro
    \le
    L_{\mathrm{tan}}G\eta\sum_{i=0}^{t}w_i(t-i).
\end{equation}
Let
\begin{equation}
    S=\sum_{i=0}^{t}w_i(t-i)
    =
    t\beta^t+(1-\beta)\sum_{k=1}^{t-1}k\beta^k .
\end{equation}
For $A=\sum_{k=1}^{t-1}k\beta^k$,
\begin{equation}
    A=\beta+2\beta^2+\cdots+(t-1)\beta^{t-1},
\end{equation}
and
\begin{equation}
    \beta A=\beta^2+2\beta^3+\cdots+(t-2)\beta^{t-1}+(t-1)\beta^t .
\end{equation}
Subtracting,
\begin{align}
    (1-\beta)A
    &=
    \beta+\beta^2+\cdots+\beta^{t-1}-(t-1)\beta^t\\
    &=
    \frac{\beta(1-\beta^{t-1})}{1-\beta}-(t-1)\beta^t .
\end{align}
Substituting into $S$,
\begin{align}
    S
    &=
    t\beta^t+\frac{\beta-\beta^t}{1-\beta}-t\beta^t+\beta^t\\
    &=
    \frac{\beta-\beta^{t+1}}{1-\beta}
    <
    \frac{\beta}{1-\beta}.
\end{align}
Therefore
\begin{equation}
    \|E_t\|_\fro
    \le
    L_{\mathrm{tan}}G\eta\frac{\beta}{1-\beta}
    =
    C_1\eta .
\end{equation}
Now consider the projected polar factor. Let
\begin{equation}
    O_t=\mathcal P_t(\Polar(M_t))=\mathcal P_t(\Polar(G_{\mathrm{tan}}+E_t)),
    \qquad
    O_g=\mathcal P_t(\Polar(G_{\mathrm{tan}})).
\end{equation}
By Assumption~\ref{app:assump:polar_stability}, the polar factor is locally stable, so
\begin{equation}
    O_t=O_g+\Delta_o,
    \qquad
    \|\Delta_o\|_\fro\le C_{\mathrm{pol}}\|E_t\|_\fro\le C_1C_{\mathrm{pol}}\eta .
\end{equation}
The first-order tangent contribution to the effective direction is $H_{\mathrm{tan}}=\gamma_tO_t$, which is tangent by construction because of the projection $\mathcal P_t$. For default RODE, $\gamma_t=1$; for an optional damped variant, assume
\begin{equation}
    0<\gamma_{\min}\le \gamma_t\le 1 .
\end{equation}
Then
\begin{align}
    \langle G_{\mathrm{tan}},H_{\mathrm{tan}}\rangle_\fro
    &=
    \gamma_t
    \left(
    \langle G_{\mathrm{tan}},O_g\rangle_\fro
    +
    \langle G_{\mathrm{tan}},\Delta_o\rangle_\fro
    \right).
\end{align}
For the projected polar target, since $G_{\mathrm{tan}}$ lies in the current tangent space,
\begin{equation}
    \langle G_{\mathrm{tan}},O_g\rangle_\fro
    =
    \langle G_{\mathrm{tan}},\Polar(G_{\mathrm{tan}})\rangle_\fro
    =
    \|G_{\mathrm{tan}}\|_*
    \ge
    \|G_{\mathrm{tan}}\|_\fro .
\end{equation}
For the perturbation term,
\begin{equation}
    |\langle G_{\mathrm{tan}},\Delta_o\rangle_\fro|
    \le
    \|G_{\mathrm{tan}}\|_\fro\|\Delta_o\|_\fro
    \le
    C_1C_{\mathrm{pol}}\eta\|G_{\mathrm{tan}}\|_\fro .
\end{equation}
Hence
\begin{align}
    \langle G_{\mathrm{tan}},H_{\mathrm{tan}}\rangle_\fro
    &\ge
    \gamma_t(1-C_1C_{\mathrm{pol}}\eta)\|G_{\mathrm{tan}}\|_\fro\\
    &\ge
    \gamma_{\min}(1-C_1C_{\mathrm{pol}}\eta)\|G_{\mathrm{tan}}\|_\fro .
\end{align}
If $\|G_{\mathrm{tan}}\|_\fro\le G_{\max}$, then
\begin{equation}
    \|G_{\mathrm{tan}}\|_\fro
    \ge
    \frac{\|G_{\mathrm{tan}}\|_\fro^2}{G_{\max}} .
\end{equation}
Therefore
\begin{equation}
    \langle G_{\mathrm{tan}},H_{\mathrm{tan}}\rangle_\fro
    \ge
    \frac{\gamma_{\min}(1-C_1C_{\mathrm{pol}}\eta)}{G_{\max}}
    \|G_{\mathrm{tan}}\|_\fro^2 .
\end{equation}
For $\eta<1/(C_1C_{\mathrm{pol}})$, define
\begin{equation}
    K_{\mathrm{tan}}
    =
    \frac{\gamma_{\min}(1-C_1C_{\mathrm{pol}}\eta)}{G_{\max}}>0.
\end{equation}
This proves tangent alignment.
\end{proof}

\subsection{Orthogonal synthesis}

\begin{lemma}[Alignment synthesis in the first-order model]
\label{app:lem:global_alignment}
The radial and tangent alignment bounds imply Assumption~\ref{app:assump:alignment}.
\end{lemma}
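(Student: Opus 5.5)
The plan is to write the effective first-order direction as the orthogonal sum $H_t=H_{\mathrm{rad}}+H_{\mathrm{tan}}$. Here $H_{\mathrm{rad}}=G_{\mathrm{rad}}=\langle G_t,U_t\rangle_\fro U_t$ comes from Lemma~\ref{app:lem:radial_alignment}, and $H_{\mathrm{tan}}=\gamma_tO_t$ comes from Lemma~\ref{app:lem:tangent_alignment}. The gradient splits the same way: $G_t=G_{\mathrm{rad}}+G_{\mathrm{tan}}$ with $G_{\mathrm{tan}}=\mathcal P_t(G_t)=G_t^\perp$. By construction $G_{\mathrm{rad}}$ and $H_{\mathrm{rad}}$ are multiples of $U_t$, while $G_{\mathrm{tan}}$ and $H_{\mathrm{tan}}$ lie in the range of $\mathcal P_t$. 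Any two such pieces are therefore Frobenius-orthogonal, and the cross terms $\langle G_{\mathrm{rad}},H_{\mathrm{tan}}\rangle_\fro$ and $\langle G_{\mathrm{tan}},H_{\mathrm{rad}}\rangle_\fro$ vanish.

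The key steps, in order, are as follows. First, expand
\[
\langle G_t,H_t\rangle_\fro=\langle G_{\mathrm{rad}},H_{\mathrm{rad}}\rangle_\fro+\langle G_{\mathrm{tan}},H_{\mathrm{tan}}\rangle_\fro,
\]
which holds by the orthogonality just noted. Second, apply Lemma~\ref{app:lem:radial_alignment} to the first term, giving a lower bound of $K_{\mathrm{rad}}\|G_{\mathrm{rad}}\|_\fro^2$. Third, apply Lemma~\ref{app:lem:tangent_alignment} to the second term, giving a lower bound of $K_{\mathrm{tan}}\|G_{\mathrm{tan}}\|_\fro^2$ for $\eta<1/(C_1C_{\mathrm{pol}})$. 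Fourth, set $\alpha=\min\{K_{\mathrm{rad}},K_{\mathrm{tan}}\}>0$ and use the Pythagorean identity $\|G_t\|_\fro^2=\|G_{\mathrm{rad}}\|_\fro^2+\|G_{\mathrm{tan}}\|_\fro^2$. Together these give $\langle G_t,H_t\rangle_\fro\ge\alpha\|G_t\|_\fro^2$, which is Assumption~\ref{app:assump:alignment}.

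If the two channels carry distinct step sizes $\eta_\rho$ and $\eta_{\mathrm{dir}}$, I will fold the ratio into $H_t$. That is, I write $\eta=\eta_{\mathrm{dir}}$ and $H_{\mathrm{rad}}=(\eta_\rho/\eta_{\mathrm{dir}})G_{\mathrm{rad}}$, so that $K_{\mathrm{rad}}$ becomes $\eta_\rho/\eta_{\mathrm{dir}}$ and remains positive. The minimum defining $\alpha$ is adjusted accordingly; this is consistent with the appendix's remark that unequal coordinate stepsizes are absorbed into $H_t$.

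I expect the main obstacle to be justifying that $H_{\mathrm{tan}}$ is exactly tangent at $U_t$ and that $H_{\mathrm{rad}}$ is exactly parallel to $U_t$ in this first-order model. Without that, the cross terms do not vanish. Tangency follows from the explicit projection $\mathcal P_t$ in $O_t$. Radiality follows from the definition of $H_{\mathrm{rad}}$, since geodesic curvature is excluded from $H_t$ by the scope of this appendix. I will state both facts explicitly before expanding the inner product.
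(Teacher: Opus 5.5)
Your proposal is correct and follows the paper's proof essentially step for step: you split both $G_t$ and $H_t$ orthogonally into radial and tangent parts, note that the cross terms vanish, apply the two component bounds, set $\alpha=\min\{K_{\mathrm{rad}},K_{\mathrm{tan}}\}$, and finish with the Pythagorean identity for $\|G_t\|_{\mathrm{F}}^2$. Your remark about folding $\eta_\rho/\eta_{\mathrm{dir}}$ into $H_{\mathrm{rad}}$ is a harmless extension that the paper does not need, since it takes $H_{\mathrm{rad}}=G_{\mathrm{rad}}$ with a single $\eta$ and hence $K_{\mathrm{rad}}=1$.
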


\begin{proof}
Decompose
\begin{equation}
    G_t=G_{\mathrm{rad}}+G_{\mathrm{tan}},
    \qquad
    H_t=H_{\mathrm{rad}}+H_{\mathrm{tan}}.
\end{equation}
The radial and tangent subspaces are orthogonal, so
\begin{equation}
    \langle G_{\mathrm{rad}},H_{\mathrm{tan}}\rangle_\fro=0,
    \qquad
    \langle G_{\mathrm{tan}},H_{\mathrm{rad}}\rangle_\fro=0,
\end{equation}
and
\begin{equation}
    \|G_t\|_\fro^2
    =
    \|G_{\mathrm{rad}}\|_\fro^2
    +
    \|G_{\mathrm{tan}}\|_\fro^2 .
\end{equation}
Thus
\begin{align}
    \langle G_t,H_t\rangle_\fro
    &=
    \langle G_{\mathrm{rad}},H_{\mathrm{rad}}\rangle_\fro
    +
    \langle G_{\mathrm{tan}},H_{\mathrm{tan}}\rangle_\fro\\
    &\ge
    K_{\mathrm{rad}}\|G_{\mathrm{rad}}\|_\fro^2
    +
    K_{\mathrm{tan}}\|G_{\mathrm{tan}}\|_\fro^2 .
\end{align}
Let $K=\min\{K_{\mathrm{rad}},K_{\mathrm{tan}}\}>0$. Then
\begin{equation}
    \langle G_t,H_t\rangle_\fro
    \ge
    K(\|G_{\mathrm{rad}}\|_\fro^2+\|G_{\mathrm{tan}}\|_\fro^2)
    =
    K\|G_t\|_\fro^2 .
\end{equation}
Taking $\alpha=K$ proves the claim.
\end{proof}

\subsection{PL linear convergence}

\begin{theorem}[PL recursion for an aligned first-order model]
\label{app:thm:pl}
Assume smoothness, the PL condition, bounded updates, directional alignment for the first-order update $W_{t+1}=W_t-\eta H_t$, and $0<2\eta\alpha\mu<1$. Then this aligned effective update satisfies
\begin{equation}
    f(W_t)-f^*
    \le
    (1-2\eta\alpha\mu)^t(f(W_0)-f^*)
    +
    \frac{K_{\mathrm{err}}\eta}{2\alpha\mu},
\end{equation}
where $K_{\mathrm{err}}=\frac{L}{2}G^2$.
\end{theorem}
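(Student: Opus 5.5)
The plan is the standard one-step descent argument followed by unrolling a linear recursion. Write $\Delta_t=f(W_t)-f^*$.

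First, apply Assumption~\ref{app:assump:smooth} (equivalently the $L$-smoothness in Assumption~\ref{app:assump:pl}) with $X=W_t$ and $Y=W_{t+1}=W_t-\eta H_t$:
\begin{equation}
f(W_{t+1})\le f(W_t)-\eta\langle G_t,H_t\rangle_\fro+\frac{L}{2}\eta^2\|H_t\|_\fro^2 .
\end{equation}
Next, bound the two terms on the right separately.
The inner product is controlled by the alignment Assumption~\ref{app:assump:alignment}, which can be taken directly or supplied by Lemma~\ref{app:lem:global_alignment}. It gives $-\eta\langle G_t,H_t\rangle_\fro\le-\eta\alpha\|G_t\|_\fro^2$.
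The quadratic term is controlled by Assumption~\ref{app:assump:direction_bound}, which gives $\frac{L}{2}\eta^2\|H_t\|_\fro^2\le\frac{L}{2}G^2\eta^2=K_{\mathrm{err}}\eta^2$.

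Then apply the PL inequality in the form $\|G_t\|_\fro^2\ge2\mu\Delta_t$, which is valid because the coefficient $-\eta\alpha$ is negative. Subtracting $f^*$ from both sides yields the contraction
\begin{equation}
\Delta_{t+1}\le(1-2\eta\alpha\mu)\Delta_t+K_{\mathrm{err}}\eta^2 .
\end{equation}
The hypothesis $0<2\eta\alpha\mu<1$ makes the factor $r=1-2\eta\alpha\mu$ lie in $(0,1)$.

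Finally, unroll by induction:
\begin{equation}
\Delta_t\le r^t\Delta_0+K_{\mathrm{err}}\eta^2\sum_{s=0}^{t-1}r^s .
\end{equation}
Bounding the geometric sum by $1/(1-r)=1/(2\eta\alpha\mu)$ gives the residual $K_{\mathrm{err}}\eta/(2\alpha\mu)$, which is exactly the claimed $\mathcal O(\eta)$ neighborhood.

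I do not expect a real obstacle, since the statement is deterministic and every step is a direct substitution. The only care points are the following.
\begin{itemize}
\item PL must be applied only after alignment has produced a negative multiple of $\|G_t\|_\fro^2$, so that the inequality direction is preserved.
\item The restriction $r\in(0,1)$ is needed both for monotonicity of the unrolling and for summing the geometric series.
\item If $H_t$ is stochastic, the same argument holds pathwise under the stated almost-sure alignment and boundedness, so no expectations are needed.
\end{itemize}
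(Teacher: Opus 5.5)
Your proposal is correct and follows the paper's proof essentially step for step. Both apply $L$-smoothness at $Y=W_t-\eta H_t$, use $\|H_t\|_\fro\le G$ to get the $K_{\mathrm{err}}\eta^2$ term, apply alignment and then PL to obtain the contraction factor $1-2\eta\alpha\mu$, and unroll the recursion with the geometric sum bounded by $1/(2\eta\alpha\mu)$.
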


\begin{proof}
Consider the first-order effective update
\begin{equation}
    W_{t+1}=W_t-\eta H_t.
\end{equation}
By $L$-smoothness,
\begin{align}
    f(W_{t+1})
    &\le
    f(W_t)
    +
    \langle \nabla f(W_t),W_{t+1}-W_t\rangle_\fro
    +
    \frac{L}{2}\|W_{t+1}-W_t\|_\fro^2\\
    &=
    f(W_t)
    -
    \eta\langle G_t,H_t\rangle_\fro
    +
    \frac{L}{2}\eta^2\|H_t\|_\fro^2 .
\end{align}
Using $\|H_t\|_\fro\le G$,
\begin{equation}
    f(W_{t+1})
    \le
    f(W_t)
    -
    \eta\langle G_t,H_t\rangle_\fro
    +
    K_{\mathrm{err}}\eta^2,
    \qquad
    K_{\mathrm{err}}=\frac{L}{2}G^2 .
\end{equation}
Directional alignment gives
\begin{equation}
    -\eta\langle G_t,H_t\rangle_\fro
    \le
    -\eta\alpha\|G_t\|_\fro^2.
\end{equation}
Therefore
\begin{equation}
    f(W_{t+1})
    \le
    f(W_t)-\eta\alpha\|G_t\|_\fro^2+K_{\mathrm{err}}\eta^2.
\end{equation}
Subtract $f^*$ from both sides:
\begin{equation}
    f(W_{t+1})-f^*
    \le
    f(W_t)-f^*
    -
    \eta\alpha\|G_t\|_\fro^2
    +
    K_{\mathrm{err}}\eta^2 .
\end{equation}
The PL condition implies
\begin{equation}
    \|G_t\|_\fro^2
    \ge
    2\mu(f(W_t)-f^*).
\end{equation}
Thus
\begin{align}
    f(W_{t+1})-f^*
    &\le
    f(W_t)-f^*
    -
    2\eta\alpha\mu(f(W_t)-f^*)
    +
    K_{\mathrm{err}}\eta^2\\
    &=
    (1-2\eta\alpha\mu)(f(W_t)-f^*)
    +
    K_{\mathrm{err}}\eta^2 .
\end{align}
Let $\Delta_t=f(W_t)-f^*$ and $\rho=2\eta\alpha\mu$. The recursion is
\begin{equation}
    \Delta_{t+1}\le (1-\rho)\Delta_t+K_{\mathrm{err}}\eta^2.
\end{equation}
Expanding the first few steps makes the geometric structure explicit:
\begin{align}
    \Delta_1
    &\le
    (1-\rho)\Delta_0+K_{\mathrm{err}}\eta^2,\\
    \Delta_2
    &\le
    (1-\rho)^2\Delta_0
    +(1-\rho)K_{\mathrm{err}}\eta^2
    +K_{\mathrm{err}}\eta^2,\\
    \Delta_3
    &\le
    (1-\rho)^3\Delta_0
    +(1-\rho)^2K_{\mathrm{err}}\eta^2
    +(1-\rho)K_{\mathrm{err}}\eta^2
    +K_{\mathrm{err}}\eta^2 .
\end{align}
By induction,
\begin{equation}
    \Delta_t
    \le
    (1-\rho)^t\Delta_0
    +
    K_{\mathrm{err}}\eta^2
    \sum_{i=0}^{t-1}(1-\rho)^i .
\end{equation}
Since
\begin{equation}
    \sum_{i=0}^{t-1}(1-\rho)^i
    =
    \frac{1-(1-\rho)^t}{\rho}
    <
    \frac{1}{\rho},
\end{equation}
we obtain
\begin{align}
    \Delta_t
    &\le
    (1-\rho)^t\Delta_0
    +
    \frac{K_{\mathrm{err}}\eta^2}{\rho}\\
    &=
    (1-2\eta\alpha\mu)^t(f(W_0)-f^*)
    +
    \frac{K_{\mathrm{err}}\eta}{2\alpha\mu}.
\end{align}
This proves linear convergence to an $\mathcal O(\eta)$ neighborhood for the aligned first-order model. It does not extend the PL rate automatically to every branch of the finite implementation; Appendix~\ref{app:global_convergence} provides the separate convergence-neighborhood guarantee for the exact raw-buffer update under its stated assumptions.
\end{proof}

\end{document}